\def\BibTeX{{\rm B\kern-.05em{\sc i\kern-.025em b}\kern-.08em
    T\kern-.1667em\lower.7ex\hbox{E}\kern-.125emX}}
\crefname{section}{Sec.}{Secs.}
\Crefname{section}{Section}{Sections}
\Crefname{table}{Table}{Tables}
\crefname{table}{Tab.}{Tabs.}
\Crefname{figure}{Fig.}{Fig.}
\crefname{figure}{Fig.}{Fig.}
\newtheorem{corr}{Corollary}
\newtheorem{lem}{Lemma}
\theoremstyle{definition}
\DeclareRobustCommand\onedot{\futurelet\@let@token\@onedot}
\def\@onedot{\ifx\@let@token.\else.\null\fi\xspace}
\newcommand{\mm}{MaxMin}
\newcommand{\maxmin}{MaxMin\xspace}
\newcommand{\abs}{absolute value\xspace}
\newcommand{\universal}{\emph{universal}\xspace}
\newcommand{\expressable}{\textit{1-CPWL}\xspace}
\newcommand{\express}{\emph{express}\xspace}
\newcommand{\onedim}{$1$-dimensional\xspace}
\newcommand{\CPWL}{continuous, piecewise-linear}
\newcommand{\ols}{$1$-Lipschitz\xspace}
\newcommand{\nfun}{$\mathcal{N}$-function\xspace}
\newcommand{\nact}{$\mathcal{N}$-activation\xspace}
\newcommand{\titlenact}{N-Activation}
\newcommand{\AOL}{AOL\xspace}
\newcommand{\CPL}{CPL\xspace}
\newcommand{\SOC}{SOC\xspace}
\newcommand{\absidinit}{AbsId-Initialiation\xspace}
\newcommand{\zeroinit}{Zero-Initialiation\xspace}
\newcommand{\randominit}{Random-Initialiation\xspace}
\newcommand{\CIFARS}{CIFAR-10\xspace}
\newcommand{\tinyIN}{Tiny ImageNet\xspace}
\newcommand{\ZeroChannelConcatenation}{\operatorname{ZeroChannelConcatenation}}
\newcommand{\FirstChannels}[1]{\operatorname{FirstChannels}(#1)}
\newcommand{\ConcatenationPooling}{\operatorname{ConcatenationPooling}}
\newcommand{\ConvBlock}[1]{\operatorname{ConvBlock}(#1)}
\newcommand{\cra}{certified robust accuracy\xspace}
\newcommand{\Cra}{Certified robust accuracy\xspace}
\newcommand{\tableheader}{Certified Robust Accuracy}
\renewcommand{\vec}[1]{\boldsymbol{#1}}
\newcommand{\mrtimes}[1]{\multirow{ 2}{*}{$#1\times \Big\{$}}
\newcommand{\citet}[1]{\cite{#1}}
\newcommand{\citep}[1]{\cite{#1}}
\begin{document}

\title{1-Lipschitz Neural Networks are more expressive with N-Activations}

\author{
    \IEEEauthorblockN{Bernd Prach}
    \IEEEauthorblockA{
        \textit{Institute of Science and Technology Austria (ISTA)} \\
        Klosterneuburg, Austria \\
        bprach@ist.ac.at
    }
    \and
    \IEEEauthorblockN{Christoph H. Lampert}
    \IEEEauthorblockA{
        \textit{Institute of Science and Technology Austria (ISTA)} \\
        Klosterneuburg, Austria \\
        chl@ist.ac.at
    }

}
    
\maketitle
\IEEEpeerreviewmaketitle

\begin{abstract}
    A crucial property for achieving secure, trustworthy and interpretable deep learning systems is their robustness: small changes to a system's inputs should not result in large changes to its outputs.
    Mathematically, this means one strives for networks with a small \emph{Lipschitz constant}. 
    Several recent works have focused on how to construct such \emph{Lipschitz networks}, 
    typically by imposing constraints on the weight matrices.
    In this work, we study an orthogonal aspect, namely the role of the activation function. We show that commonly used activation functions, such as \emph{MaxMin}, as well as all piece-wise linear ones with two segments unnecessarily restrict the class of representable functions, even in the simplest one-dimensional setting.
    We furthermore introduce the new \nact function that is provably more expressive 
    than currently popular activation functions.
    %
\end{abstract}

\begin{IEEEkeywords}
Lipschitz networks, robustness, expressiveness
\end{IEEEkeywords}

\section{Introduction}
In recent years, deep neural networks have achieved
state-of-the-art performances in most computer vision tasks.
However, those models have a (maybe surprising) shortcoming:
very small changes to an input of a model, 
usually invisible to the human eye,
can change the output of the model drastically.
This phenomenon is known as \emph{adversarial examples}
\cite{adversarial_examples_2014_szegedy}.

The fact that humans can easily create small adversarial perturbations and fool deep networks makes it hard to trust those models when used in the real world.
Furthermore, it could lead to security issues in high-stakes
computer vision tasks, for example in autonomous driving. 
Finally, the existence of adversarial examples
is a big problem for interpretability,
and it makes it very hard to find explanations
that are truthful to the model and at the same
time useful to humans 
(see e.g. \cite{explainable_requires_robustness_2021_Leino}).
Therefore, there has been a large interest in training
deep networks that are more robust to small changes in 
the input.

One way of making models more robust is 
\emph{adversarial training} \cite{adversarial_training_2015_goodfellow}.
In adversarial training, adversarial examples are constructed during
training and added to the training set
in order to make the model more robust.
However, whilst adversarial training does make it harder to 
find adversarial examples at inference time,
it lacks guarantees of whether they do exist.
We do believe that having some guarantees can be very important.
Another method that does give guarantees is 
\emph{randomized smoothing} \cite{randomized_smoothing_2019_cohen}. 
In randomized smoothing, at inference time, one adds random noise
to an input before classifying it. This is done thousands
of times for each image, and the majority vote is taken.
This procedure does guarantee the robustness of the system,
however, it comes with a huge computational overhead at inference time.

Therefore, we believe that the most promising way to obtain
robust models is by restricting the Lipschitz constant
of a model.
Early approaches tried to accomplish this using regularization
\cite{Cisse_2017_ICML}
or weight clipping \cite{arjovsky2017wasserstein}.
However, these were replaced by approaches that actually give
guarantees of robustness
\cite{Tsuzuku_2018_NIPS,Anil_2019_ICML,Li_2019_NIPS_BCOP,Huang_2020_CVPR,Trockman_2021_ICLR,Singla_2021_ICML,Leino_2021_ICML,Prach_2022_ECCV}.
These models usually restrict the Lipschitz constant
of a model to be at most $1$ by restricting the Lipschitz constant
of each individual layer.

Such \emph{\ols models} are provably robust,
however, their empirical performance is much worse than
what we might hope for.
This is even true on very simple tasks 
and for fairly small perturbations of the input.
This raises the question 
if this task is just inherently very difficult,
or if there is a theoretical shortcoming in our current approaches 
that prevents them from reaching better performance.

In this paper, we will show that there actually is a shortcoming.
We prove that the activation function commonly used in
such \ols networks, \maxmin~\cite{Anil_2019_ICML},
does not allow them to express all
functions that we might expect such networks to
be able to express.
In particular, we show (theoretically and experimentally) that 
\ols networks with \maxmin activation functions
can not even express a very simple \onedim (\ols) function!

To overcome the shortcoming, we introduce an activation function that 
(provably) can express any reasonable \onedim function.
We call the it the \emph{\nact}.
Apart from providing a theoretical guarantee,
we also show empirical results.
In particular, we show that the \nact is a competitive
replacement for the \maxmin activation in the task
of certified robust classification. \medskip

In summary, we show a shortcoming with currently popular
activation functions, both empirically and experimentally.
Then we propose an activation function that provably
overcomes this limitation.

\section{Background and Definitions}\label{sec:background}
Before discussing our main results and related work, we introduce some notation and terminology. 
We call a layer, a network or a function, 
$f : \mathbb{R}^n \rightarrow \mathbb{R}^m$, \emph{\ols}%
\footnote{We will exclusively consider the Lipschitz property 
with respect to the $L_2$-norm in this work.} if
\begin{align}
\forall x , y \in \mathbb{R}^n, \qquad
    \|f(x) - f(y)\|_2 \le \|x-y\|_2,
    \label{eq:ols}
\end{align}
Examples of \ols functions are 
linear functions $x\mapsto Ax$ for which the matrix $A$ has \emph{operator  norm} no bigger than $1$, as well as many common activation functions, such as 
\emph{ReLU}, \emph{\abs}, 
or \emph{\maxmin} \cite{Anil_2019_ICML} (see \Cref{eq:maxmin}). 
From the definition in \Cref{eq:ols}
it follows directly that the concatenation of \ols functions is \ols again. 
This provides an immediate mechanism to construct \ols neural networks: one simply alternates linear operations with suitably constrained weights matrices with any choice of \ols activation function.
We will use shorthand notation 
\emph{\mm}-networks for such \ols networks with \emph{\mm}-activation function, etc. 

In this work, we are particularly interested in the predictive power of \ols networks. 
Clearly, there are principled restrictions on the kinds of functions that these can express.
In particular, if the activation function is continuous and piece-wise linear, the whole network will also only be able to express continuous piece-wise linear functions.
Also, by construction, the network can of course only express 1-Lipschitz functions.
Within those two restrictions, though, the networks should ideally be able to express any function.

Formally, we call a function \expressable, if it is \ols, continuous and piece-wise linear with finitely many segments. 
We say an activation function can \express a function $f$
if we can write $f$ using only \ols linear operations and the activation function.
We call an activation function that can \express all \expressable functions
\expressable-\universal (short: \universal).

With this notation, the key questions 
we are interested in can be formulated as: 
\textbf{Do the classes of expressible functions differ between different activation functions?} and 
\textbf{Does there exist a \universal activation function?}

\section{Related work}\label{sec:relatedwork}
\subsection{\ols linear layers}
A number of designs for networks with Lipschitz constant $1$
have been proposed in recent years, 
including~\citet{Prach_2022_ECCV,Meunier_2022_ICML,brau_2023_ugnn,Singla_2021_ICML,Trockman_2021_ICLR,Leino_2021_ICML,Anil_2019_ICML}.
They often focus on how to parameterize the linear layers of a network efficiently. 
We will describe details here for three particular methods,
that we use in this paper to evaluate our results. \medskip

The \emph{Almost Orthogonal Lipschitz (\AOL)} method \cite{Prach_2022_ECCV}
uses a rescaling-based approach in order to make fully connected
as well as convolutional layers \ols.
Mathematically, for a parameter matrix $P$, they define a diagonal matrix $D$ with
\begin{align}
    D_{ii} = \left( \sum_j | P^\top P |_{ij} \right)^{-1/2}.
\end{align}
Then they show that the linear layer given as $f(x) = PDx + b$ is \ols.
They also extend this bound and propose a channel-wise rescaling
that guarantees convolutions to be \ols. \medskip

\emph{Convex Potential Layers (CPL)} \cite{Meunier_2022_ICML}
is another method of parameterizing \ols layers.
The proposed \ols layer is given as 
\begin{align}
    f(x) = x - \frac{2}{\|P\|_2^2} P^\top \sigma \left( Px + b \right),
\end{align}
for $P$ a parameter matrix, 
and $\sigma$ a non-decreasing \ols function, usually ReLU.
Here, the spectral norm is usually computed using power iterations. \medskip

A third method, \emph{Skew Orthogonal Convolutions (\SOC)}
\cite{soc}
produces layers with an orthogonal Jacobian matrix.
The \SOC layer can be written as
\begin{align} \label{eq:soc}
    f(x) = x + \frac{P \star x}{1!} + \frac{P \star^2 x}{2!} + \dots 
        + \frac{P \star^k x}{k!} + ..,
\end{align}
where $P \star^k$ denotes application of $k$ convolutions
with kernel $P$.
When the kernel, $P$, is skew-symmetric,
then the Jacobian of this layer is orthogonal.
For their experiments, the authors use a truncated version of \Cref{eq:soc},
with 5 terms during training and 12 during inference. \medskip

For an overview of further methods of parameterizing \ols linear layers
see e.g. \cite{comparison_2023_unpublished}. \medskip

\subsection{Shortcomings of the \ols setup}

Whilst a lot of work has focused on parameterizing \ols linear layers so far, 
much less work has focused on 
the problems and shortcomings of \ols networks in general,
as well as shortcomings with
the architectural decisions often made in current \ols networks.

Some works have looked into general limitations of robust networks.
For example, \cite{bubeck_2021_universal,bombari_2023_beyond_universal_law} showed that
in order to perfectly interpolate noise in the training data
in a robust way,
one does require many times more parameters than data points.
However, we are not interested in achieving zero training loss,
but we care about the generalization performance,
so their theory is not applicable in our scenario.
Similarly, \cite{corner_2023_leino} showed that there
exist data distributions
for which a simple robust classifier exists,
however,
any \ols score-based classifier requires 
to fit a much more complicated function
with many more non-linearities.

There is some work (such as \cite{universal_2019_cohen})
that provides results for different norms (like $L_\infty$),
however, in this paper, we are only interested in 1-Lipschitz networks
with respect to the $L_2$ norm.

As shown by \cite{dense_2020_Eckstein}
any 1-Lipschitz function (with one output) 
can be approximated (arbitrarily well) by a neural network 
in a way that the approximation is also 1-Lipschitz.
However, we are more interested in \ols neural network
with the commonly used design where every single
layer is \ols,
and all intermediate representations are 
\ols transformations of the inputs as well.

For the $L_2$ norm and
for networks consisting purely of \ols layers,
early on, \citet{Anil_2019_ICML} observed that the 
otherwise popular ReLU activation function is not a good choice in this context.
They showed that \ols ReLU networks are not able to fit even some simple functions, 
such as the absolute value function.
As an improvement, the authors suggested \emph{GroupSort} with its special
case of \emph{\mm}, given as
\begin{align} \label{eq:maxmin}
    \operatorname{MaxMin}(\left(\begin{matrix} x \\ y \end{matrix} \right) )
        = \left(\begin{matrix} \max (x, y) \\ 
            \min (x, y) \end{matrix} \right),
\end{align}
and proved a result of universal approximation.
However, for this result they constraint the operator norm 
of the weight matrices using the $L_\infty$ norm
instead of the $L_2$ norm.

Other activations such as \emph{Householder activations} \cite{householder_2021_singla}
have been proposed to improve results with \ols networks,
however, e.g. \cite{Splines_2022_ArXiv} showed
that the corresponding networks do represent the same set of functions as \maxmin-networks.

Finally, as a very promising direction, \cite{splines_2020_ieee} used (parameterized) linear splines as an activation function in \ols networks.
Building on this work, \cite{Neumayer_2022_ArXiv} showed that \ols linear splines with $3$ linear regions are \universal{} as an activation function in $1$ dimension.
However, training spline-networks with a general parameterization seems difficult in practice, 
and \citet{Splines_2022_ArXiv} use three different learning rates 
as well as auxiliary loss functions in order to do that.
As one of our contributions, we will show that we can further restrict the set of splines that can be learned to have a specific structure, without restricting the class of functions the network can express.

\section{Theoretical Results}~\label{sec:method}
In this section, we present our main results, namely that already for functions with one-dimensional input and output the answer to both of the questions we asked in Section~\ref{sec:background} is \emph{yes}.
First, in Section~\ref{subsec:minmax} we show that the commonly used activation functions are not able to express all \expressable functions. 
Then, in Section~\ref{subsec:N}, we introduce a new activation function, called $\mathcal{N}$-activation, and show that \nact networks are indeed able to express any \expressable one-dimensional function.

\subsection{Limitations of the existing activation functions}\label{subsec:minmax}
Our first result is that a whole class of activation functions,
namely \expressable{} functions with 2 segments are not \universal:
\begin{restatable}{thm}{twopiecebad} \label{thm:2piecebad}
    No 2-piece \expressable activation is \universal.
\end{restatable}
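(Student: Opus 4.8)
The plan is a short case analysis on the two slopes of the activation. Write an arbitrary $2$-piece \expressable activation as a scalar function $\sigma$ with left slope $a$ and right slope $c$; since $\sigma$ is \ols we have $|a|,|c|\le 1$, and the two additive constants of $\sigma$ will play no role. For an \ols feed-forward network $f\colon\mathbb{R}\to\mathbb{R}$ built from $\sigma$ and \ols affine maps I will use two elementary facts: (i) at almost every $x_0$, the chain rule gives $f'(x_0)=M_L\,\Delta_L\,M_{L-1}\cdots \Delta_1\,M_0$, where the $M_i$ are the layers' weight matrices (operator norm $\le 1$) and each $\Delta_i$ is a diagonal matrix with entries in $\{a,c\}$; and (ii) $f$ is affine both for all sufficiently large and for all sufficiently negative arguments, with one-sided asymptotic slopes $s^-(f)$ and $s^+(f)$. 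In each case I will exhibit an \expressable function — either $|\cdot|$ or ReLU — that $\sigma$ cannot \express.

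If $a=c$, then $\sigma$ is affine, every $\sigma$-network is affine, and so none equals $|\cdot|$. If $\max(|a|,|c|)<1$, then any network expressing the non-affine function $|\cdot|$ must contain at least one activation layer, so fact~(i) gives $|f'(x_0)|\le\max(|a|,|c|)^{L}<1$ almost everywhere, which is incompatible with $|\cdot|$ (whose derivative has modulus $1$ a.e.). If exactly one of $|a|,|c|$ equals $1$ — say $|c|=1>|a|$, the other case being symmetric after replacing $\sigma(x)$ by $\sigma(-x)$ and absorbing the sign flips into adjacent linear layers — then at any $x_0$ with $|f'(x_0)|=1$ the inequalities in $|f'(x_0)|=|M_L\Delta_L\cdots M_0|\le\prod_i\|M_i\|\le 1$ are all tight; tracing the norm-preserving vector through the layers forces each $\Delta_i$ to act on the relevant slope vector as multiplication by $c$, so $f'(x_0)$ equals a number that depends only on the weight matrices, not on $x_0$. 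Hence $f'$ is constant on $\{\,|f'|=1\,\}$, contradicting the fact that $|\cdot|$ has derivative $+1$ on $(0,\infty)$ and $-1$ on $(-\infty,0)$.

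The remaining case is $|a|=|c|=1$ with $a\ne c$; that is, $\{a,c\}=\{-1,1\}$, so that $\sigma$ is, up to an additive constant, $\pm|\cdot-t|$ and in particular even about $t$. Here I claim $|s^-(f)|=|s^+(f)|$ for every $\sigma$-network $f$, which rules out ReLU, for which $s^-=0\ne 1=s^+$. To see the claim, track the asymptotic slope vectors of the successive pre- and post-activations as $x\to\pm\infty$: since $\sigma'$ takes the opposite values $a$ and $c=-a$ at $-\infty$ and at $+\infty$, an induction over the layers shows that the post-activation slope vector at $-\infty$ is exactly the negative of the one at $+\infty$ (coordinates whose pre-activation has asymptotically zero slope contribute $0$ to both), and therefore $s^-(f)=-s^+(f)$.

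Combining the cases, every $2$-piece \expressable activation fails to \express at least one of the \expressable functions $|\cdot|$ and ReLU, so none of them is \universal. I expect the last case to be the main obstacle: the pointwise derivative bound that drives the other cases is vacuous there, since both linear pieces of $\sigma$ are isometries, so one must instead argue through the asymptotic slopes, which requires some care with pre-activations that have zero asymptotic slope (but possibly different limits at $\pm\infty$) and with the base step of the induction.
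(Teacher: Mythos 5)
Your case split on the slopes $(a,c)$ is natural, but the whole argument silently assumes a more restrictive network model than the one the theorem is about: in your fact~(i) every $\Delta_i$ has entries only in $\{a,c\}$, i.e.\ the activation is applied to \emph{every} coordinate of every hidden layer. The paper's notion of \express{}ing a function allows the activation to be applied to only some coordinates, with the identity (a perfectly valid \ols linear operation) on the rest --- this is exactly how its own proof is phrased (``applying the \abs-activation to at least some of the elements'') and it is what the MaxMin corollary needs, since \maxmin decomposes into \abs on half the coordinates and identity on the other half. Under that model your hardest case, $\{a,c\}=\{-1,1\}$, breaks down completely: ReLU \emph{is} expressible by such an activation, e.g.
\begin{align}
\operatorname{ReLU}(x)\;=\;\tfrac{1}{\sqrt{2}}\begin{pmatrix}1 & 1\end{pmatrix}\,
\Phi\!\left(\tfrac{1}{\sqrt{2}}\begin{pmatrix}1\\ 1\end{pmatrix}x\right),
\qquad \Phi\begin{pmatrix}u\\ v\end{pmatrix}=\begin{pmatrix}u\\ |v|\end{pmatrix},
\end{align}
where both linear maps have norm $1$ and the \abs-like activation touches only one coordinate. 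So ReLU cannot serve as a witness, and your asymptotic-slope parity claim $s^-(f)=-s^+(f)$ is simply false once identity coordinates are allowed (the base step of your induction fails: untouched coordinates keep equal, not opposite, slopes at $\pm\infty$ --- indeed the identity map itself violates the claim). Your case $\max(|a|,|c|)<1$ also uses the restrictive convention (identity coordinates give slope-$1$ paths, killing the bound $\max(|a|,|c|)^L$), though that case is repairable by the same tightness argument you use when exactly one slope has modulus $1$: norm preservation at a point with $|f'(x_0)|=1$ forces each activation layer to act on the slope vector by a fixed diagonal matrix determined by the architecture, so $f'$ is constant on $\{|f'|=1\}$, which already rules out $|\cdot|$.

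The unrepaired case $\{a,c\}=\{-1,1\}$ is precisely the one the paper is really about, and it is where the genuinely new idea is needed. The paper's witness is the \nfun, which has asymptotic slope $1$ on both sides (so no parity or derivative-magnitude obstruction applies); the proof looks at the \emph{first non-trivial} application of \abs, i.e.\ the first time it hits a coordinate whose pre-activation $\theta_t x + b_t$ has $\theta_t\neq 0$, and shows that the two inputs $x_0\pm\delta/2$ symmetric about the zero of that coordinate are mapped to points whose distance is at most $\sqrt{(1-\theta_t^2)}\,\delta$; since all later layers are \ols, the output gap is bounded by the same quantity, contradicting the fact that the \nfun's average slope over growing symmetric intervals tends to $1$. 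You would need an argument of this quantitative, information-loss type (or another witness with both asymptotic slopes of modulus $1$) to close your last case; as it stands, the proposal proves non-universality only for the smaller class of networks in which the activation acts on all coordinates, which is not the statement the paper proves nor one strong enough to yield its \maxmin corollary.
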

In order to prove this theorem,
it is enough to show that
there exists a 1-dimensional, \expressable{} function
that cannot be expressed by any network with such activation functions.
There is in fact a simple function that can not be expressed;
we call it the \nfun. It is given as:
\newcommand{\half}{\frac{1}{2}}
\begin{align}
    \mathcal{N}(x) = \left\{\begin{array}{llrll}
        x+1 & \text{ ... } &            & x  & \le -\half \\
        -x  & \text{ ... } & -\half \le & x  & \le  \half \\
        x-1 & \text{ ... } &  \half \le & x.
    \end{array} \right.
\end{align}
It is visualized in \Cref{fig:n_function}.
It has the property that the average gradient between $-x$ and $+x$ converges
to $1$ when $x$ goes to infinity, and we show that this property
cannot be achieved with \abs networks.
This then proves \cref{thm:2piecebad} since any 2-piece
\expressable{} activation function can be written
as a linear (and \ols) combination of an
\abs-activation and the identity map
(with suitable biases).

\begin{figure}[t]
\centering
\includegraphics[width=6cm]{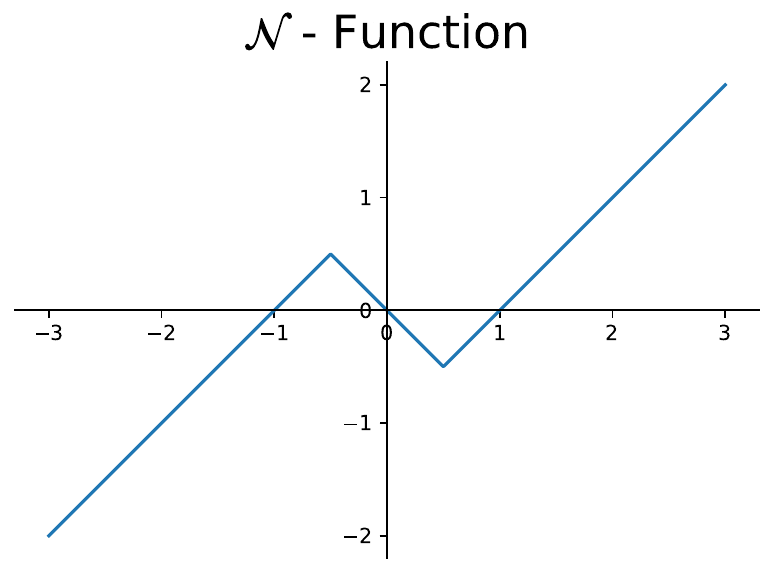}
\caption{
    A plot of the \nfun.
}
\label{fig:n_function}
\end{figure}

\Cref{thm:2piecebad}
implies that activation functions 
such as ReLU, leaky ReLU and \abs are not \universal.
Furthermore, in \Cref{eq:maxmin_equals_abs_id} we show that 
we can write \mm-activations using orthogonal matrices
and a combination of identity connections and \abs{} activations. 
Therefore, \Cref{thm:2piecebad} also
implies that \mm-activations are not \universal:

\begin{restatable}{corr}{maxminbad} \label{thm:maxminbad}
    \mm-activations are not \universal.
\end{restatable}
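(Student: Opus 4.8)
The plan is to deduce the corollary from \Cref{thm:2piecebad} by showing that a single \maxmin activation can be simulated by a tiny \ols network whose only nonlinearity is the \abs activation. First I would record the elementary identities $\max(x,y)=\tfrac12(x+y)+\tfrac12|x-y|$ and $\min(x,y)=\tfrac12(x+y)-\tfrac12|x-y|$. Letting $Q=\tfrac{1}{\sqrt2}\bigl(\begin{smallmatrix}1&1\\1&-1\end{smallmatrix}\bigr)$, which is orthogonal (hence \ols) and satisfies $Q=Q^{-1}$, a direct computation gives
\begin{align*}
  \operatorname{MaxMin}\!\begin{pmatrix}x\\y\end{pmatrix}
   = Q\begin{pmatrix}\tfrac{1}{\sqrt2}(x+y)\\ \tfrac{1}{\sqrt2}|x-y|\end{pmatrix}
   = Q\,\bigl(\mathrm{id}\oplus|\cdot|\bigr)\!\left(Q\begin{pmatrix}x\\y\end{pmatrix}\right),
\end{align*}
i.e. \maxmin factors as (orthogonal map) $\circ$ (activation layer applying the identity to one coordinate and \abs to the other) $\circ$ (orthogonal map). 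This is the decomposition referred to in \Cref{eq:maxmin_equals_abs_id}.

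Next I would argue that substituting this factorization for every occurrence of \maxmin turns an arbitrary \mm-network into an equivalent network expressing the same function, now built purely from \ols linear operations, identity skip-connections, and \abs activations. Consequently the set of functions expressible by \mm-networks is contained in the set of functions expressible with \abs together with identity connections and \ols linear maps.

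Finally I would invoke \Cref{thm:2piecebad}: its proof exhibits the \onedim \expressable{} function \nfun{} and shows it cannot be expressed by any network of this form — precisely, that the limiting-average-gradient property of \nfun{} is unattainable for \abs-networks (with identity connections). Since every \mm-network belongs to that class, no \mm-network can express \nfun{}; as \nfun{} is \expressable, this shows that \mm-activations are not \universal. The only place requiring care is this last step: one must check that the argument behind \Cref{thm:2piecebad} rules out the joint use of \abs and identity connections inside an activation layer, not merely a pure \abs activation applied coordinate-wise — but this is automatic, since \Cref{thm:2piecebad} already covers every 2-piece \expressable{} activation, each of which is itself a \ols combination of \abs and the identity.
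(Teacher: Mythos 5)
Your proposal is correct and follows essentially the same route as the paper: the identical orthogonal factorization of \maxmin into $M \circ (\mathrm{id}\oplus|\cdot|) \circ M$ from \Cref{eq:maxmin_equals_abs_id}, substitution into an arbitrary \mm-network, and reduction to the non-expressibility of the \nfun. The only cosmetic difference is that the decisive step is really carried by \Cref{lem:absvalbad}, whose proof explicitly allows activation layers applying \abs to only some coordinates (i.e.\ identity connections), which is the cleanest way to justify the final point you flag.
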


\begin{figure}[t]
\centering
\includegraphics[width=6cm]{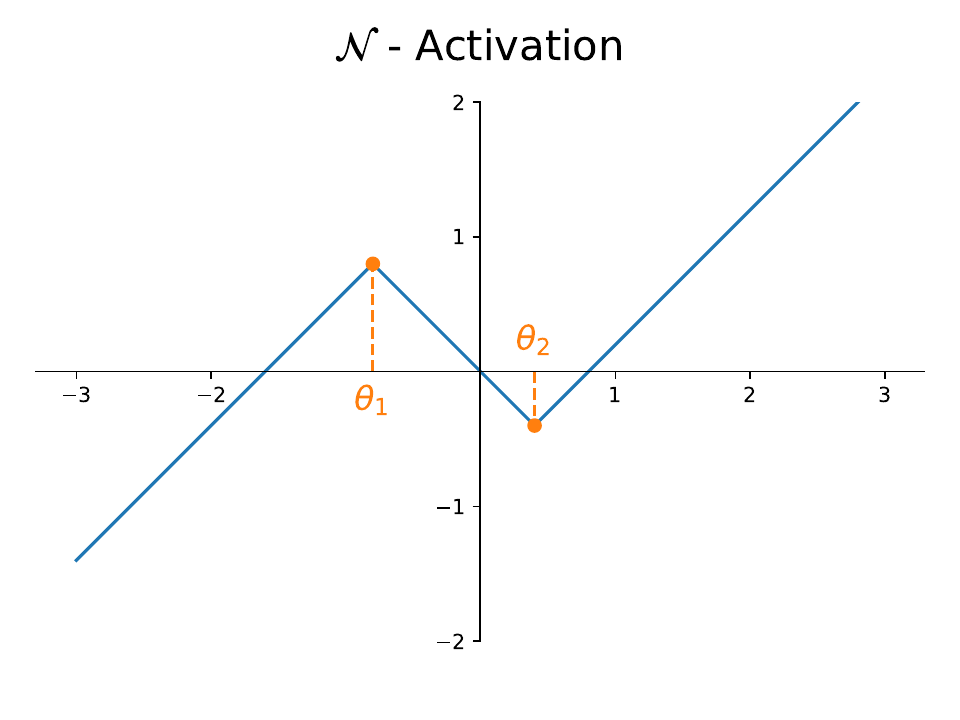}
\caption{
    A plot of the \nact with parameters $\theta_1$ and $\theta_2$.
}
\label{fig:n_activation}
\end{figure}

\subsection{The \titlenact{} is \universal in 1D.}~\label{subsec:N}
In order to be able to express the \nfun{}, 
we will define an activation that is more expressive
than \mm{}.
Inspired by the \nfun{}, we define the (parameterized) \nact{}:

\newcommand{\thmax}{\theta_{\max}}
\newcommand{\thmin}{\theta_{\min}}
\newcommand{\nactdef}{
\begin{align}
    \mathcal{N}(x; \theta_1, \theta_2)
    = \left\{\begin{array}{lllll}
        x-2\thmin  & \text{ .. } &               &x  &\le \thmin \\
        -x         & \text{ .. } & \thmin \le  &x  &\le \thmax \\
        x-2\thmax  & \text{ .. } & \thmax \le  &x,
    \end{array} \right.
\end{align}
}
\nactdef{}%
where $\thmax = \max\left(\theta_1, \theta_2\right)$
and $\thmin = \min\left(\theta_1, \theta_2\right)$.
For a visualization of the \nact see \Cref{fig:n_activation}.

In our arguments, we will allow $\theta_1$ to be $-\infty$. 
In this case, the \nact{} is equal to the \abs activation when $\theta_2=0$, 
so this is equivalent to allowing a network to use a combination of \nact{}s 
(with $\theta_1$ and $\theta_2$ finite) and \abs activations.
Note that for bounded inputs, setting $\theta_1$ to
some negative value with high magnitude will have the
same effect as (theoretically) setting it to $-\infty$.
Also note that setting $\theta_1 = \theta_2 = 0$ allows us 
to express the identity map using an \nact{}.

A network using the \nact{} is trivially able to fit
the \nfun{}, but furthermore, the \nact{} is \universal:
It can express any \onedim, \expressable{} function:

\begin{restatable}{thm}{nactuniveral} \label{thm:nactuniversal}
    \nact{}s are \universal{} in 1D.
    Specifically, any \expressable{}, \onedim{} function
    with $k$ non-linearities
    can be represented by a network of width $2$ consisting of $(k+5)$
    linear layers and $(\frac{3}{2}k + 5)$ \nact{}s.
\end{restatable}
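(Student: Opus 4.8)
The plan is to construct $f$ by a single left‑to‑right sweep over its breakpoints, maintaining a width‑$2$ hidden state that simultaneously encodes the portion of $f$ already reconstructed and a progressively transformed copy of the input from which the remaining breakpoints can still be produced. Concretely, order the $k$ breakpoints as $t_1 < \dots < t_k$ with slopes $s_0,\dots,s_k \in [-1,1]$ (the first and last pieces being rays), and let $f_j$ be the \expressable{} function obtained from $f$ by deleting every breakpoint to the right of $t_j$, so $f_0$ is affine with slope $s_0$ and $f_k = f$. Each $f_j$ is \ols{}, it agrees with $f_{j-1}$ on $(-\infty,t_j]$, and $f_j = f_{j-1} + \tfrac{\delta_j}{2}\bigl((x-t_j) + |x-t_j|\bigr)$ with $\delta_j = s_j - s_{j-1} \in [-2,2]$; that is, the transition only changes the rightmost slope. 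The first few layers map $x$ into the initial state and the last few read the scalar $f(x)$ off the final state by a single \ols{} affine map; these fixed‑size ``setup'' and ``teardown'' blocks are meant to account for the additive constant $5$ in both counts.

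The heart of the argument is a gadget realizing one transition $f_{j-1}\to f_j$ with $O(1)$ layers of width $2$, each individually \ols{}. I would build the $|x-t_j|$ part of a new kink from an \nact{} with $\theta_1 = -\infty$, which by the remarks following its definition is exactly a shifted \abs{}, i.e.\ a fold at $t_j$; the genuinely new capability over $2$-piece activations --- creating a kink whose slope \emph{decreases}, thereby temporarily ``spending'' Lipschitz budget and later recovering it, which by \Cref{thm:2piecebad} no \abs-network can do --- comes from the finite-$\theta$ ``N-shape'', while an \ols{} affine combination across the two channels is what turns the $\pm 1$ slopes of the $\mathcal{N}$-pieces into arbitrary target slopes in $[-1,1]$. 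I expect the efficient version to process breakpoints essentially in pairs: a single \nact{}, suitably placed, introduces two breakpoints at once (its two kinks at $\theta_1$ and $\theta_2$), an auxiliary \nact{} on the second channel handles the slope bookkeeping, and one linear layer per breakpoint re-mixes the channels --- plausibly the source of the rate of $\tfrac32$ activations and $1$ linear layer per non-linearity, hence the $k$ in $k+5$.

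The hard part will be the interaction of width $2$ with the $L_2$ constraint: a \ols{} map $\mathbb{R}\to\mathbb{R}^2$ has a single unit ``velocity budget'' shared between its two coordinates, so one cannot carry a pristine copy of $x$ alongside the running reconstruction --- the input must be consumed and transformed step by step, and wherever the reconstruction already has slope near $\pm1$ the auxiliary channel must be (nearly) flat there. Making the sweep respect this --- choosing the orthogonal (rotation/reflection) mixing maps and the placement of each \nact{} so that every intermediate layer is \emph{exactly} \ols{} while the net effect of each gadget is precisely the slope jump $\delta_j$, with no overshoot and no unrecoverable down-scaling of the useful signal --- is the technical core. The remaining obstacle is then purely bookkeeping: verifying that the setup block, the $k$ (or $k/2$ paired) transition gadgets, and the teardown block use exactly $k+5$ linear layers and $\tfrac{3}{2}k+5$ \nact{}s, fitting the multiple coordinate-wise activations between consecutive linear layers.
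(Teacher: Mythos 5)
Your plan correctly isolates the central obstacle --- in a width-$2$, $L_2$-constrained network you cannot carry a faithful copy of $x$ alongside the partial reconstruction, since $x \mapsto (f_{j-1}(x), x)$ has Lipschitz constant $\sqrt{1 + f_{j-1}'(x)^2} > 1$ wherever $f_{j-1}' \neq 0$ --- but it then leaves exactly this obstacle unsolved, and it is where the argument breaks. Your transition gadget realizing $f_j = f_{j-1} + \tfrac{\delta_j}{2}\bigl((x-t_j) + |x-t_j|\bigr)$ needs access to something with nonzero slope in $x$ to the right of $t_j$; but whenever $f$ has a piece of slope $\pm 1$ (as the \nfun itself does), the \ols constraint forces the auxiliary channel to be constant there, so the one-kink-at-a-time additive sweep cannot be carried out with every intermediate map exactly \ols. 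Moreover, the sketch offers no mechanism at all for non-monotone targets: an incremental ``add a ReLU-type kink'' construction is precisely what the paper's budget argument behind \Cref{thm:2piecebad} rules out, and your appeal to ``spending and recovering Lipschitz budget'' via the finite-$\theta$ N-shape is a hope, not a construction; you yourself flag the technical core as open, and that core is the actual content of the theorem. The claimed counts ($k+5$ layers, $\tfrac{3}{2}k+5$ \nact{}s) are likewise asserted to come out of unspecified setup/teardown blocks rather than derived.

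For comparison, the paper closes these gaps with two ideas your sweep lacks. First, for \emph{increasing} \expressable{} functions with slope $1$ outside the breakpoint region, it maintains the invariant $g_i(x) = x$ for $x \ge t_i$: the partial reconstruction itself serves as the copy of the input to the right of the processed region, so no second channel of budget is needed; each step rotates by $(\alpha_i,\beta_i)$ with $\alpha_i^2+\beta_i^2 = 1$ and $\alpha_i^2-\beta_i^2 = s_i$, applies one \nact (parameters $\beta_i t_i, \beta_i t_{i+1}$) on one channel and the identity on the other, and projects back, setting the slope on $[t_i,t_{i+1}]$ to $s_i$ while leaving everything else unchanged ($k$ layers, $k-1$ \nact{}s). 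Second, non-monotonicity is handled not inside the sweep but by induction on the number $2l$ of local extrema: flipping $f$ between its highest local maximum and the lowest subsequent local minimum yields a function with $2l-2$ extrema, and $f$ is recovered by applying a single \nact to the \emph{output} of that simpler function (parameters $g(k_s), g(k_t)$); this range-side use of the N-shape is the step your left-to-right construction has no counterpart for. Finally, arbitrary end slopes $s_s, s_t$ on an unbounded domain (including mixed signs) are handled by pre-/post-composition with maps built from three \abs activations (i.e.\ $\theta_1 = -\infty$), which is where the additive constant $5$ arises. Without these (or equivalent) ideas, your proposal remains a program rather than a proof.
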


The proof can be found in \cref{sec:nactgood}.
We first show that we can express \emph{increasing} \expressable{} functions
using $k$ linear layers and $(k-1)$ \nact{}s.
Then we show that each function with $2l$ local extreme points
can be expressed by applying the \nact{} to some function with
$2l-2$ local extreme points, 
and that allows us to use induction to show that we can express
functions with any number of extreme points.
Finally, we need $3$ \abs activations to adjust the slope
of a function before the first and after the last non-linearity.

\section{Experimental setup}

\subsection{Fitting the \nfun{}}

We first designed a toy experiment to show that
not only is it theoretically impossible to exactly
express the \nfun,
but also approximating the \nfun{} well
with \mm-networks seems impossible, 
even when the inputs are bounded.
To show this, we 
train networks to fit the \nfun{} on the interval $[-3, 3]$.
As a training set, we sample 1000 points uniformly from [-3, 3], 
and obtain the targets by applying the \nfun.
We try to fit this data using networks with different activation functions.
Each network consists of 3 dense layers with activation functions 
in between.
%
We use \AOL to constrain the linear layers to be \ols.
The layers are of width 40. 
We optimize each network using Nesterov SGD with learning rate
0.01 and momentum of 0.9, with a batch size of 100
for 1000 epochs.
We fit each of the networks to our training set using mean squared error,
and also report the mean squared error on the training set for the different models.

\subsection{Certified Robust Classification}
In order to confirm that our method does not
only work in theory and in constructed toy
problems, we also compare our activation
function to \mm{} in the standard task
for evaluating 1-Lipschitz models: 
Certified Robust Classification.

In certified robust classification 
we do not only aim for models that classify inputs correctly.
We furthermore want to be able to guarantee that no
perturbations with magnitude bounded by some predefined value
can change the class a model predicts.
Mathematically, an input is classified \emph{certifiably robustly}
with radius $\epsilon$ by a model
if we can show that no perturbation of size
at most $\epsilon$ can change the prediction
of the model.
The \cra measures the proportion of test
examples that are classified correctly
as well as certifiably robustly.
For \ols models we can use the \ols property
to easily evaluate if an example is classified
certifiably robustly:
We only need to consider the difference
between the highest two scores generated by the model.
If it is large enough we can be sure that no
small perturbation can change the order of those
scores, and therefore the class
predicted by the model can not change.
For details see \cite{Tsuzuku_2018_NIPS}.

It is not obvious that higher expressiveness
leads to better \cra. However, we do use a loss
function that encourages high robust accuracy on the training data.
We hope that the fact that \nact-networks are
more expressive than \maxmin-networks will allow
\nact-networks to obtain a lower loss.
In \ols networks, it generally seems the case
that fitting the training data is a bigger
challenge than generalizing. Therefore we
hope that a more expressive model will also
have a higher \cra.

In this paper, our main focus is on 
\cra with $\epsilon=36/255$.
We optimize the hyperparameters 
(specifically learning rate, see \Cref{table:learningrates})
with respect
to this metric, and also report
results with this value of $\epsilon$
in our figures. \medskip

For our experimental setup, we largely follow
\cite{comparison_2023_unpublished},
including architecture, loss function hyperparameters
as well as implementations.
We describe the details of our experiments below. \medskip

\setlength{\tabcolsep}{4pt}
\begin{table}[t]
\begin{center}
\caption{
    Learning rates used for different layer types.
}
\label{table:learningrates}
\begin{tabular}{cl}

\noalign{\smallskip} \hline \noalign{\smallskip}
Layer Type & Learning Rate \\
\noalign{\smallskip} \hline \noalign{\smallskip}
\AOL & $10^{-1.6}$ \\
\CPL & $10^{-0.4}$ \\
\SOC & $10^{-1.0}$ \\
\noalign{\smallskip} \hline \noalign{\smallskip}

\end{tabular}
\end{center}
\end{table}
\setlength{\tabcolsep}{1.4pt}

\subsubsection{Architecture}
Our architecture is similar to the
medium-sized (M) architecture from 
\cite{comparison_2023_unpublished}.
It is a relative standard convolutional architecture,
where the number of channels doubles whenever the
resolution is reduced.
All convolutional layers have the same input and
output size, 
which is achieved by using $\ZeroChannelConcatenation$ layers
to append channels with value $0$ to an input,
and $\FirstChannels{c}$ layers,
that return the first $c$ channels of the input and ignore the rest.
We use $\ConcatenationPooling$ to reduce the image size.
In our architecture the $\ConcatenationPooling$
takes all values from a
$2 \times 2 \times c$ patch and stacks them into one vector of size
$1 \times 1 \times 4c$.
Note that this layer is very similar to the \emph{PixelUnshuffle} layer
of PyTorch, however, it differs in the order of channels of the output.
For an overview of the architecture see
\Cref{table:architecture,table:conv_block}. \medskip

\setlength{\tabcolsep}{4pt}
\begin{table}[t]
\begin{center}
\caption{
    \textbf{Architecture used},
    depending on width parameter $w$
    and number of classes $c$.
    A description of the \emph{ConvBlock} can be found 
    in \Cref{table:conv_block}.
}
\label{table:architecture}
\begin{tabular}{rl|l}
\hline\noalign{\smallskip}
& Layer name & Output size \\
\noalign{\smallskip}
\hline
\noalign{\smallskip}

& $\ZeroChannelConcatenation$ & $32 \times 32 \times w$   \\
& $1 \times 1$ Convolution          & $32 \times 32 \times w$ \\
& Activation                        & $32 \times 32 \times w$ \\

& $\ConvBlock{k=3}$                & $16 \times 16 \times 2w$ \\
& $\ConvBlock{k=3}$               & $8 \times 8 \times 4w$ \\
& $\ConvBlock{k=3}$               & $4 \times 4 \times 8w$ \\
& $\ConvBlock{k=3}$               & $2 \times 2 \times 16w$ \\
& $\ConvBlock{k=1}$                & $1 \times 1 \times 32w$ \\

& $1 \times 1$ Convolution          & $1 \times 1 \times 32w$ \\
& $\FirstChannels{c}$     & $1 \times 1 \times c$ \\
& Flatten                           & $c$ \\

\noalign{\smallskip}
\hline
\end{tabular}
\end{center}
\end{table}
\setlength{\tabcolsep}{1.4pt}

\setlength{\tabcolsep}{4pt}
\begin{table}[t]
\begin{center}
\caption{
    Convolutional block.
    We show $\ConvBlock{k}$, 
    for input size $s \times s \times w$
    and kernel size $k$.
}
\label{table:conv_block}
\begin{tabular}{rl|l}
\hline\noalign{\smallskip}
& Layer name & Output size \\
\noalign{\smallskip}
\hline
\noalign{\smallskip}

\mrtimes{5}
& $k \times k$ Convolution          & $s \times s \times w$ \\
& Activation                        & $s \times s \times w$ \\
& $\FirstChannels{w/2}$             & $s \times s \times w/2$ \\
& $\ConcatenationPooling$           & $s/2 \times s/2 \times 2w$ \\

\noalign{\smallskip}
\hline
\end{tabular}
\end{center}
\end{table}
\setlength{\tabcolsep}{1.4pt}

\subsubsection{Loss function}
We use the loss function proposed in \cite{Prach_2022_ECCV}.
Like \cite{comparison_2023_unpublished} we set
the \emph{offset} parameter $u$ to $2 \sqrt{2} \epsilon$,
for $\epsilon=36/255$, and the \emph{temperature}
parameter $t$ to $1/4$.
The loss function is defined as
\begin{align}
    \mathcal{L}(\vec{x}, \vec{y}) 
    = \operatorname{crossentropy}
    \left(\vec{y}, \ 
    \operatorname{softmax}\left(\frac{\vec{x} - u\vec{y}}{t}\right)
    \right)t,
\end{align}
for $\vec{x}$ the vector of class scores and $\vec{y}$ the
one-hot encoding of the target labels.

\medskip

\subsubsection{Optimization}
We use a setup similar to \cite{comparison_2023_unpublished}.
We use SGD with a momentum of 0.9 for all experiments. We also use a learning rate schedule. We choose to use \emph{OneCycleLR}, as described by \cite{onecyclelr_2019_smith}, with default values as in \emph{PyTorch}.
We train for 1000 epochs and
set the batch size to 256 for all experiments.
We subtract the dataset means before training.
As data augmentation, we use random crops and random flips
on CIFAR, and \emph{RandAugment} \cite{randaugment_2020_cubuk} on \tinyIN.
\medskip

\subsubsection{Hyperparameter search}
We first did some preliminary search to figure out
reasonable learning rates for each layer type.
We did this search by (uniformly) randomly sampling
log-learning-rates, and evaluating the different
models on \CIFARS in terms of the \cra (for $\epsilon=36/255$).
Then we selected the learning rate (for all datasets) based
on those results.
The learning rates we choose can be found in
\cref{table:learningrates}. \medskip

\subsubsection{Initialization of the \nact}
\label{sec:nact_init}
Initialization can be a very important aspect of \ols networks,
since those usually do not have residual connections or batch
normalization layers that help against vanishing gradients.
We also found that the initialization of the \nact is crucial
to ensure good performance.
We took inspiration from the \maxmin activation.
In particular, we noticed that the \maxmin activation is equivalent
(up to a rotation of input and output) 
to applying absolute value and identity map in an alternating way
(see \Cref{eq:maxmin_equals_abs_id}).
We initialize the \nact to express this activation:
For each pair of channels, 
for the first channel we set $\theta_1 = -100$ and $\theta_2 = 0$
(to initialize as the \abs function),
for the second channel we set $\theta_1 = \theta_2 = 0$
(to initialize as the identity map). \medskip

\subsubsection{Learning rate for the \nact}
Unfortunately, we observed that using the same learning rate
for the parameters of linear layers and for the parameters
of the \nact does hurt performance.
In order to overcome that we rescale the parameters of the
\nact by a factor of $1/10$,
effectively reducing the learning rate.\medskip

\begin{figure}[pt]
\centering
\includegraphics[height=40mm]{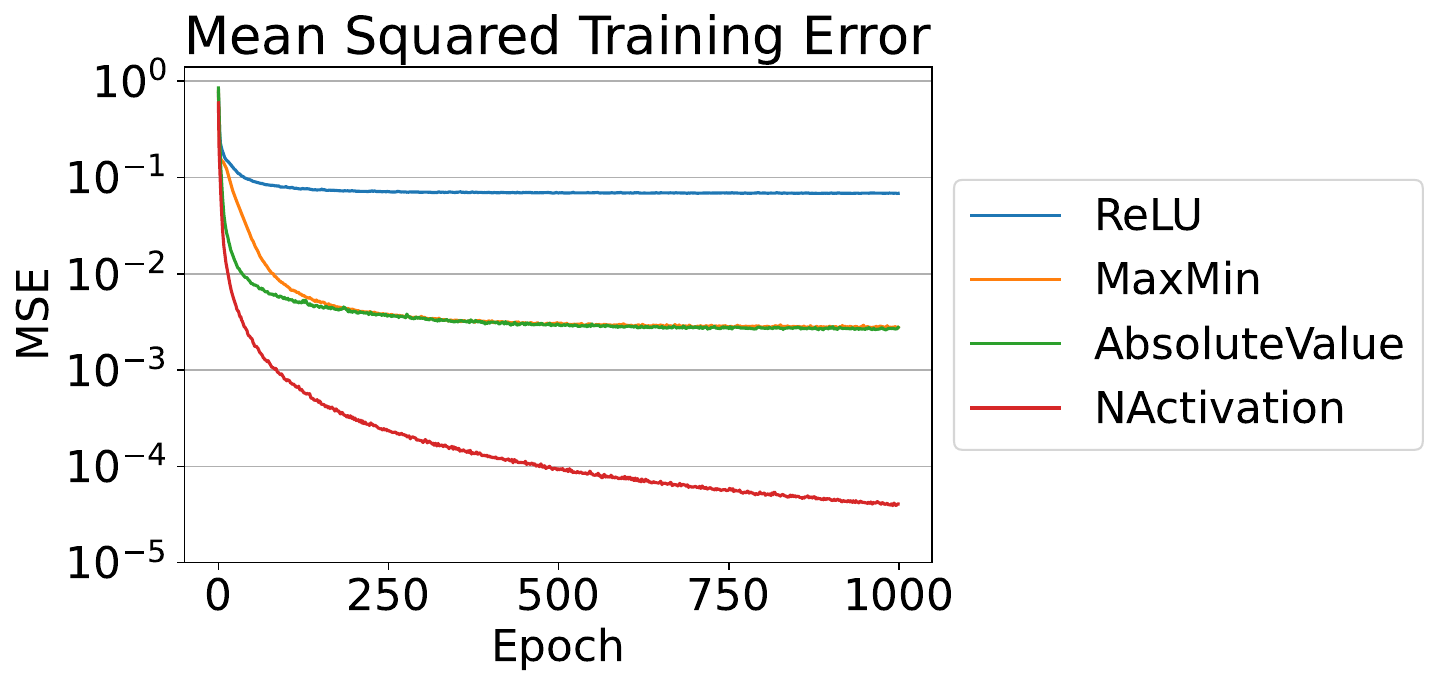}
\caption{
    Mean squared error on the training set reported
    for \ols \AOL networks with different activation functions
    for fitting the \nfun.
}
\label{fig:mse_figure}
\end{figure}

\begin{figure}[t]
\centering
\includegraphics[width=\columnwidth]{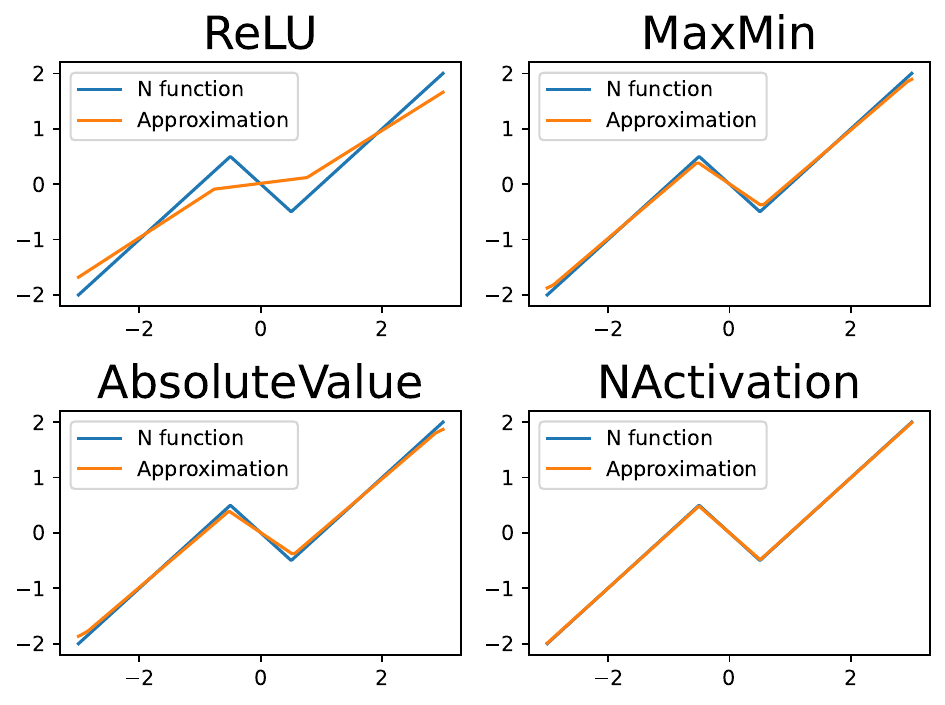}
\caption{
    ReLU networks, \maxmin networks and \abs networks can not fit
    the \nfun, whereas \nact networks can!
}
\label{fig:approximations}
\end{figure}

\begin{figure*}[pt]
\centering
\includegraphics[width=\textwidth]{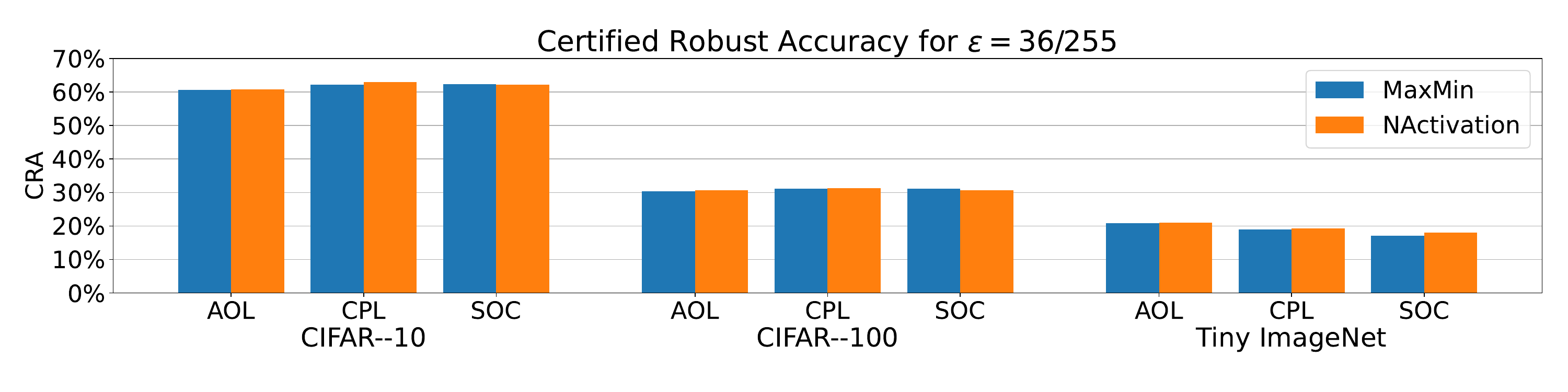}
\caption{
    \Cra on different datasets, for different \ols layers.
    \maxmin and \nact compared.
}
\label{fig:results}
\end{figure*}

\newcommand{\best}[1]{\textbf{#1}}

\begin{table*}[t]
\setlength{\tabcolsep}{4pt}
\begin{center}
\caption{
    Test set accuracy and \cra for different thresholds $\epsilon$.
    We compare the \nact{} to the \mm-activation in different settings.
    Better results are marked in bold.
}
\label{table:results}
\begin{tabular}{cccrrrrr}
\hline\noalign{\smallskip}
Dataset     & Layer     & Activation    & Accuracy  & \multicolumn{4}{c}{\tableheader} \\ %
            & Type     &               &           & 
$\epsilon=\frac{36}{255}$ & $\epsilon=\frac{72}{255}$ & $\epsilon=\frac{108}{255}$ & $\epsilon=1$ \\%
            


\noalign{\smallskip} \hline \noalign{\smallskip}
\multirow{2}{*}{CIFAR-10} & \multirow{2}{*}{AOL} & MaxMin & $72.7\%$ & $60.6\%$ & $\best{47.0\%}$ & $34.1\%$ & $4.3\%$ \\
 &  & NActivation & $\best{72.9\%}$ & $\best{60.7\%}$ & $46.9\%$ & $\best{34.7\%}$ & $\best{4.5\%}$ \\
\noalign{\smallskip} \hline \noalign{\smallskip}
\multirow{2}{*}{CIFAR-10} & \multirow{2}{*}{CPL} & MaxMin & $74.9\%$ & $62.2\%$ & $47.9\%$ & $35.2\%$ & $4.0\%$ \\
 &  & NActivation & $\best{75.0\%}$ & $\best{63.0\%}$ & $\best{48.6\%}$ & $\best{35.6\%}$ & $\best{4.2\%}$ \\
\noalign{\smallskip} \hline \noalign{\smallskip}
\multirow{2}{*}{CIFAR-10} & \multirow{2}{*}{SOC} & MaxMin & $\best{74.5\%}$ & $\best{62.3\%}$ & $49.2\%$ & $36.0\%$ & $5.0\%$ \\
 &  & NActivation & $\best{74.5\%}$ & $62.2\%$ & $\best{49.6\%}$ & $\best{36.7\%}$ & $\best{5.1\%}$ \\
\noalign{\smallskip} \hline \noalign{\smallskip}
\multirow{2}{*}{CIFAR-100} & \multirow{2}{*}{AOL} & MaxMin & $43.1\%$ & $30.4\%$ & $20.7\%$ & $14.0\%$ & $\best{2.1\%}$ \\
 &  & NActivation & $\best{43.2\%}$ & $\best{30.6\%}$ & $\best{20.9\%}$ & $\best{14.1\%}$ & $\best{2.1\%}$ \\
\noalign{\smallskip} \hline \noalign{\smallskip}
\multirow{2}{*}{CIFAR-100} & \multirow{2}{*}{CPL} & MaxMin & $\best{43.8\%}$ & $31.1\%$ & $21.3\%$ & $14.3\%$ & $2.3\%$ \\
 &  & NActivation & $43.3\%$ & $\best{31.2\%}$ & $\best{21.6\%}$ & $\best{14.8\%}$ & $\best{2.4\%}$ \\
\noalign{\smallskip} \hline \noalign{\smallskip}
\multirow{2}{*}{CIFAR-100} & \multirow{2}{*}{SOC} & MaxMin & $\best{43.0\%}$ & $\best{31.2\%}$ & $\best{21.3\%}$ & $14.3\%$ & $2.1\%$ \\
 &  & NActivation & $42.8\%$ & $30.6\%$ & $21.2\%$ & $\best{14.4\%}$ & $\best{2.3\%}$ \\
\noalign{\smallskip} \hline \noalign{\smallskip}
\multirow{2}{*}{Tiny ImageNet} & \multirow{2}{*}{AOL} & MaxMin & $30.4\%$ & $20.8\%$ & $\best{13.6\%}$ & $9.0\%$ & $1.0\%$ \\
 &  & NActivation & $\best{30.8\%}$ & $\best{20.9\%}$ & $13.5\%$ & $\best{9.1\%}$ & $\best{1.1\%}$ \\
\noalign{\smallskip} \hline \noalign{\smallskip}
\multirow{2}{*}{Tiny ImageNet} & \multirow{2}{*}{CPL} & MaxMin & $\best{28.6\%}$ & $19.0\%$ & $\best{12.4\%}$ & $\best{8.0\%}$ & $\best{0.8\%}$ \\
 &  & NActivation & $28.3\%$ & $\best{19.3\%}$ & $\best{12.4\%}$ & $\best{8.0\%}$ & $\best{0.8\%}$ \\
\noalign{\smallskip} \hline \noalign{\smallskip}
\multirow{2}{*}{Tiny ImageNet} & \multirow{2}{*}{SOC} & MaxMin & $26.4\%$ & $17.2\%$ & $\best{10.9\%}$ & $6.9\%$ & $0.6\%$ \\
 &  & NActivation & $\best{27.1\%}$ & $\best{18.1\%}$ & $\best{10.9\%}$ & $\best{7.1\%}$ & $\best{0.8\%}$ \\

 
\noalign{\smallskip}
\hline
\noalign{\smallskip}
\end{tabular}
\end{center}
\setlength{\tabcolsep}{1.4pt}
\end{table*}

\subsubsection{Adaption for \tinyIN}
In order to adjust to the larger input size of \tinyIN,
we added an additional block to the architecture.
In order to keep the number of parameters similar,
we half the width parameter $w$ for this model, 
and set it to $32$. \medskip

\section{Empirical Results}
\subsection{Fitting the \nfun{}}

We plot the mean squared errors on the training set
for different activation functions
during the 1000 training epochs
in \Cref{fig:mse_figure}.
Furthermore, we visualize the actual functions
that were learned by the models in
\cref{fig:approximations}.
%
%

We observe that even though the \nfun is \ols, 
neither the \mm network nor the \abs network 
achieve low (training) loss, even when trained for 1000 epochs. 
In contrast, the \nact network 
achieves much lower loss,
as predicted by our theory.
We also show in \cref{fig:approximations}
that the function learned when using \maxmin or
\abs-networks is visibly different to the \nfun.

\subsection{Certified Robust Classification}
We show results where we compare our proposed \nact to the
\maxmin activation in \Cref{fig:results} as well as \Cref{table:results}.
We can see that it is in fact possible to replace the commonly used
\maxmin activation with our \nact without reducing the \cra.
In 7 out of 9 settings, 
the \cra (for $\epsilon=36/255)$
is actually slightly higher when using the \nact.
This shows that the \nact is in fact competitive to the \maxmin activation.
However, it also shows that the additional expressive power that comes with
the \nact does not improve performance on certified robust classification
by a larger amount, at least in the setup we consider.
This indicates that there might be other fundamental restrictions
of \ols networks, and the lack of expressiveness is not that
important in practice.

Interestingly, both cases where \maxmin performs better
than the \nact are with \SOC layers.
This might indicate that the best choice of activation function
does depend on the \ols layer used.

\begin{table}[t]
\setlength{\tabcolsep}{4pt}
\begin{center}
\caption{
    Test set accuracy and \cra for different thresholds $\epsilon$.
    We compare the different initialization strategies
    for the \nact for \AOL and \CPL layers on \CIFARS.
    Better results are marked in bold.
}
\label{table:ablation_results}
\begin{tabular}{ccrrrrr}
\hline\noalign{\smallskip}
Layer       & \nact             & Accuracy  & \multicolumn{4}{c}{\tableheader} \\ %
Type        & Initialization    &           & 
$\epsilon=\frac{36}{255}$ & $\epsilon=\frac{72}{255}$ & $\epsilon=\frac{108}{255}$ & $\epsilon=1$ \\%
            

\noalign{\smallskip} \hline \noalign{\smallskip}
\multirow{3}{*}{AOL} & AbsId & $\best{72.9\%}$ & $\best{60.7\%}$ & $\best{46.9\%}$ & $\best{34.7\%}$ & $\best{4.5\%}$ \\
 & Zero & $65.3\%$ & $51.2\%$ & $36.5\%$ & $24.4\%$ & $1.2\%$ \\
 & Random & $70.1\%$ & $56.8\%$ & $41.8\%$ & $29.1\%$ & $2.3\%$ \\
\noalign{\smallskip} \hline \noalign{\smallskip}
\multirow{3}{*}{CPL} & AbsId & $75.0\%$ & $\best{63.0\%}$ & $48.6\%$ & $35.6\%$ & $4.2\%$ \\
 & Zero & $\best{75.4\%}$ & $\best{63.0\%}$ & $\best{49.6\%}$ & $\best{36.2\%}$ & $\best{4.5\%}$ \\
 & Random & $73.1\%$ & $59.9\%$ & $45.2\%$ & $32.3\%$ & $2.8\%$ \\ 

 
\noalign{\smallskip}
\hline
\noalign{\smallskip}
\end{tabular}
\end{center}
\setlength{\tabcolsep}{1.4pt}
\end{table}

\begin{figure}[pt]
\centering
\includegraphics[width=8cm]{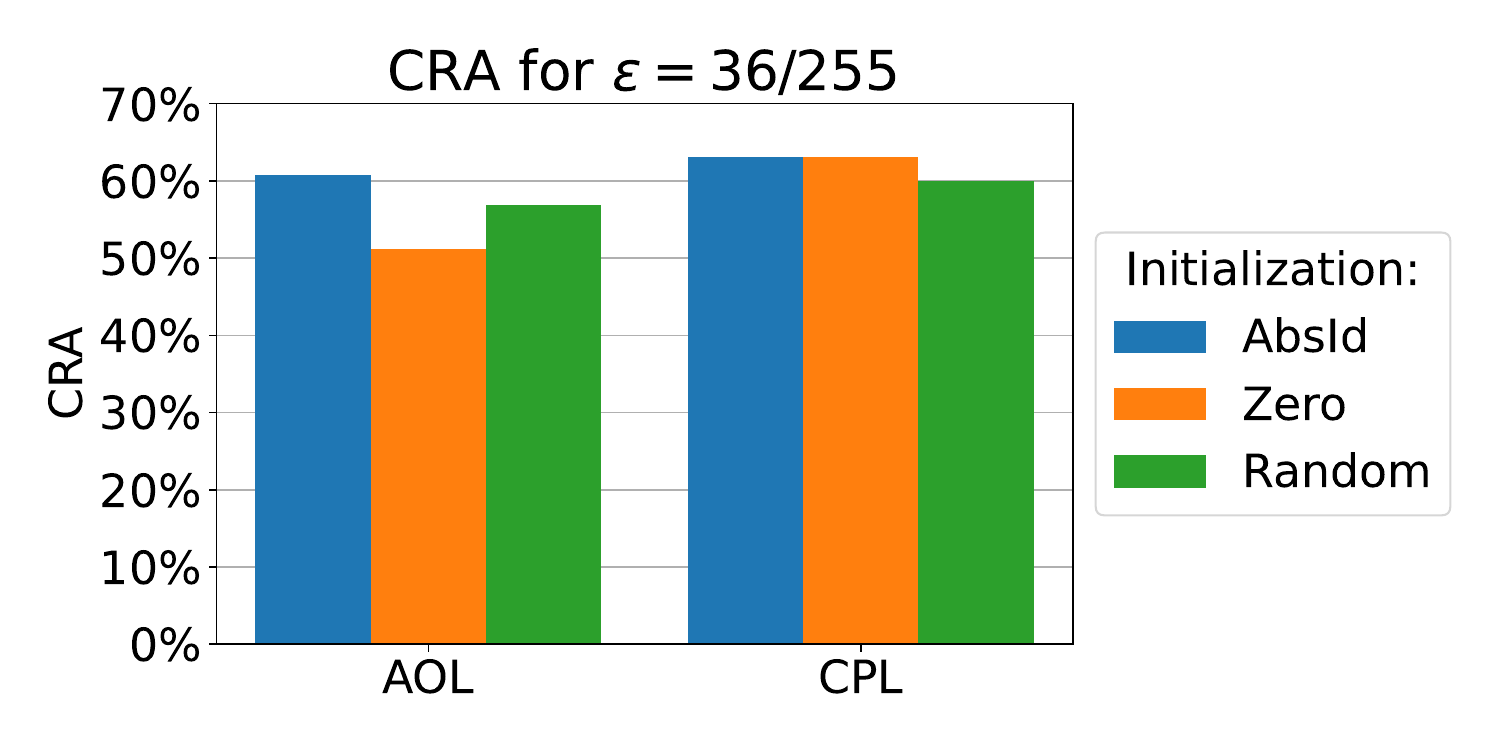}
\caption{
    \Cra for for different initialization strategies
    for the \nact.
    Evaluated on \CIFARS
    for \AOL and \CPL layers.
}
\label{fig:ablation_results}
\end{figure}

\subsection{Ablation Experiment}
We conducted an ablation experiment to evaluate how
important the initialization strategy for the
\nact is.
We compare 3 different ways of initializing:
The first way is the initialization described in \cref{sec:nact_init},
we refer to it as \emph{\absidinit}.
The second way is to initialize the \nact as the identity map,
by setting all parameters initially to zero
(\emph{\zeroinit}).
The third way, \emph{\randominit} 
is to choose the initialization values randomly.
We set $\theta_1 = - 10^{u_1}$ and $\theta_2 = + 10^{u_2}$,
for $u_1$ and $u_2$ randomly sampled in the interval $[-5, 0]$.

The results of this ablation study can be found in 
\cref{fig:ablation_results} and \cref{table:ablation_results}.
We see that the kind of initialization does influence the \cra a model achieves.
For \CPL, that is already a non-linear layer, 
\zeroinit{} performs compareable to \absidinit.
The \randominit{} does perform worse than \absidinit in both settings.

\section{Conclusion}
In this work, we analyze the expressive power of \ols networks. 
We show that the use of  previously proposed activation functions, such as \emph{MaxMin}, causes an unnecessary restriction of the class of functions that the network can represent, and we propose 
the \nact as a more expressive alternative. 
In particular, we prove that in the one-dimensional setting, $\mathcal{N}$-networks are \universal, in the sense that they can represent any piece-wise linear function with finitely many segments.
Our experiments show that $\mathcal{N}$-networks are not only theoretically appealing
but are also a reasonable replacement for the \maxmin activation in experiments,
reaching comparable 
\cra in standard benchmarks.

\bibliographystyle{IEEEtran}
\bibliography{references}

\begin{thebibliography}{10}
\providecommand{\url}[1]{#1}
\csname url@samestyle\endcsname
\providecommand{\newblock}{\relax}
\providecommand{\bibinfo}[2]{#2}
\providecommand{\BIBentrySTDinterwordspacing}{\spaceskip=0pt\relax}
\providecommand{\BIBentryALTinterwordstretchfactor}{4}
\providecommand{\BIBentryALTinterwordspacing}{\spaceskip=\fontdimen2\font plus
\BIBentryALTinterwordstretchfactor\fontdimen3\font minus \fontdimen4\font\relax}
\providecommand{\BIBforeignlanguage}[2]{{%
\expandafter\ifx\csname l@#1\endcsname\relax
\typeout{** WARNING: IEEEtran.bst: No hyphenation pattern has been}%
\typeout{** loaded for the language `#1'. Using the pattern for}%
\typeout{** the default language instead.}%
\else
\language=\csname l@#1\endcsname
\fi
#2}}
\providecommand{\BIBdecl}{\relax}
\BIBdecl

\bibitem{adversarial_examples_2014_szegedy}
C.~Szegedy, W.~Zaremba, I.~Sutskever, J.~Bruna, D.~Erhan, I.~Goodfellow, and R.~Fergus, ``Intriguing properties of neural networks,'' in \emph{International Conference on Learning Representations (ICLR)}, 2014.

\bibitem{explainable_requires_robustness_2021_Leino}
\BIBentryALTinterwordspacing
K.~Leino. (2021) Ai explainability requires robustness. [Online]. Available: \url{https://towardsdatascience.com/ai-explainability-requires-robustness-2028ac200e9a}
\BIBentrySTDinterwordspacing

\bibitem{adversarial_training_2015_goodfellow}
I.~Goodfellow, J.~Shlens, and C.~Szegedy, ``Explaining and harnessing adversarial examples,'' in \emph{International Conference on Learning Representations}, 2015.

\bibitem{randomized_smoothing_2019_cohen}
J.~Cohen, E.~Rosenfeld, and Z.~Kolter, ``Certified adversarial robustness via randomized smoothing,'' in \emph{International Conference on Machine Learing (ICML)}, 2019.

\bibitem{Cisse_2017_ICML}
M.~Ciss{\'e}, P.~Bojanowski, E.~Grave, Y.~N. Dauphin, and N.~Usunier, ``Parseval networks: Improving robustness to adversarial examples,'' in \emph{International Conference on Machine Learing (ICML)}, 2017.

\bibitem{arjovsky2017wasserstein}
M.~Arjovsky, S.~Chintala, and L.~Bottou, ``Wasserstein generative adversarial networks,'' in \emph{International Conference on Machine Learing (ICML)}, 2017.

\bibitem{Tsuzuku_2018_NIPS}
Y.~Tsuzuku, I.~Sato, and M.~Sugiyama, ``{L}ipschitz-margin training: Scalable certification of perturbation invariance for deep neural networks,'' in \emph{Conference on Neural Information Processing Systems (NeurIPS)}, 2018.

\bibitem{Anil_2019_ICML}
C.~Anil, J.~Lucas, and R.~B. Grosse, ``Sorting out {L}ipschitz function approximation,'' in \emph{International Conference on Machine Learing (ICML)}, 2019.

\bibitem{Li_2019_NIPS_BCOP}
Q.~Li, S.~Haque, C.~Anil, J.~Lucas, R.~B. Grosse, and J.-H. Jacobsen, ``Preventing gradient attenuation in {L}ipschitz constrained convolutional networks,'' \emph{Conference on Neural Information Processing Systems (NeurIPS)}, 2019.

\bibitem{Huang_2020_CVPR}
L.~Huang, L.~Liu, F.~Zhu, D.~Wan, Z.~Yuan, B.~Li, and L.~Shao, ``Controllable orthogonalization in training {DNNs},'' in \emph{Conference on Computer Vision and Pattern Recognition (CVPR)}, 2020.

\bibitem{Trockman_2021_ICLR}
A.~Trockman and J.~Z. Kolter, ``Orthogonalizing convolutional layers with the {Cayley} transform,'' in \emph{International Conference on Learning Representations (ICLR)}, 2021.

\bibitem{Singla_2021_ICML}
S.~Singla and S.~Feizi, ``Skew orthogonal convolutions,'' in \emph{International Conference on Machine Learing (ICML)}, 2021.

\bibitem{Leino_2021_ICML}
K.~Leino, Z.~Wang, and M.~Fredrikson, ``Globally-robust neural networks,'' in \emph{International Conference on Machine Learing (ICML)}, 2021.

\bibitem{Prach_2022_ECCV}
B.~Prach and C.~H. Lampert, ``Almost-orthogonal layers for efficient general-purpose {{L}ipschitz} networks,'' in \emph{European Conference on Computer Vision (ECCV)}, 2022.

\bibitem{Meunier_2022_ICML}
L.~Meunier, B.~J. Delattre, A.~Araujo, and A.~Allauzen, ``A dynamical system perspective for {{L}ipschitz} neural networks,'' in \emph{International Conference on Machine Learing (ICML)}, 2022.

\bibitem{brau_2023_ugnn}
F.~Brau, G.~Rossolini, A.~Biondi, and G.~Buttazzo, ``Robust-by-design classification via unitary-gradient neural networks,'' in \emph{Proceedings of the AAAI Conference on Artificial Intelligence}, 2023.

\bibitem{soc}
S.~Singla and S.~Feizi, ``Skew orthogonal convolutions,'' in \emph{International Conference on Machine Learing (ICML)}, 2021.

\bibitem{comparison_2023_unpublished}
B.~Prach, F.~Brau, G.~Buttazzo, and C.~H. Lampert, ``1-{L}ipschitz layers compared: Memory, speed, and certifiable robustness,'' 2023, unpublished.

\bibitem{bubeck_2021_universal}
S.~Bubeck and M.~Sellke, ``A universal law of robustness via isoperimetry,'' \emph{Advances in Neural Information Processing Systems}, 2021.

\bibitem{bombari_2023_beyond_universal_law}
S.~Bombari, S.~Kiyani, and M.~Mondelli, ``Beyond the universal law of robustness: Sharper laws for random features and neural tangent kernels,'' in \emph{International Conference on Machine Learing (ICML)}, 2023.

\bibitem{corner_2023_leino}
K.~Leino, ``Limitations of piecewise linearity for efficient robustness certification,'' \emph{arXiv preprint arXiv:2301.08842}, 2023.

\bibitem{universal_2019_cohen}
J.~E. Cohen, T.~Huster, and R.~Cohen, ``Universal {L}ipschitz approximation in bounded depth neural networks,'' \emph{arXiv preprint arXiv:1904.04861}, 2019.

\bibitem{dense_2020_Eckstein}
S.~Eckstein, ``{L}ipschitz neural networks are dense in the set of all {L}ipschitz functions,'' \emph{arXiv preprint arXiv:2009.13881}, 2020.

\bibitem{householder_2021_singla}
S.~Singla, S.~Singla, and S.~Feizi, ``Improved deterministic $l_2$ robustness on {CIFAR}-10 and {CIFAR}-100,'' in \emph{International Conference on Learning Representations (ICLR)}, 2021.

\bibitem{Splines_2022_ArXiv}
S.~Ducotterd, A.~Goujon, P.~Bohra, D.~Perdios, S.~Neumayer, and M.~Unser, ``Improving {L}ipschitz-constrained neural networks by learning activation functions,'' \emph{arXiv preprint arXiv:2210.16222}, 2022.

\bibitem{splines_2020_ieee}
S.~Aziznejad, H.~Gupta, J.~Campos, and M.~Unser, ``Deep neural networks with trainable activations and controlled {L}ipschitz constant,'' \emph{IEEE Transactions on Signal Processing}, 2020.

\bibitem{Neumayer_2022_ArXiv}
S.~Neumayer, A.~Goujon, P.~Bohra, and M.~Unser, ``Approximation of {L}ipschitz functions using deep spline neural networks,'' \emph{arXiv preprint arXiv:2204.06233}, 2022.

\bibitem{onecyclelr_2019_smith}
L.~N. Smith and N.~Topin, ``Super-convergence: Very fast training of neural networks using large learning rates,'' in \emph{Artificial intelligence and machine learning for multi-domain operations applications}, 2019.

\bibitem{randaugment_2020_cubuk}
E.~D. Cubuk, B.~Zoph, J.~Shlens, and Q.~V. Le, ``Randaugment: Practical automated data augmentation with a reduced search space,'' in \emph{Proceedings of the IEEE/CVF conference on computer vision and pattern recognition workshops}, 2020.

\end{thebibliography}

\appendices
\clearpage
\pagebreak 
\onecolumn

\section{Limitations of existing activation functions} 
\label{sec:maxminbad}
In this section, we will prove that common activation
functions such as ReLU, leaky ReLU, \abs{} and \mm{}
are not \universal.
Our first theorem and proof will be about \abs{}-networks.

\begin{lem} \label{lem:absvalbad}
    The Absolute Value activation is not \universal.
\end{lem}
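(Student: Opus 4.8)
The plan is to prove the lemma by exhibiting a single \onedim{} \expressable{} function that no \abs-network can represent, namely the \nfun{} $\mathcal{N}$ from \cref{sec:method}. First one checks that $\mathcal{N}$ really is \expressable{}: it is continuous, consists of three linear pieces, and all of its slopes are $+1$ or $-1$, so it is \ols{}. It therefore suffices to show that no network built only from \abs-activations and (norm-constrained) affine layers can equal $\mathcal{N}$ on all of $\mathbb{R}$. The invariant I would use is the behaviour of the two tails: for any continuous piecewise-linear $h\colon\mathbb{R}\to\mathbb{R}^m$ with finitely many pieces there is an $M$ with $h$ affine on $(M,\infty)$ and on $(-\infty,-M)$; write $s_+(h),\,s_-(h)\in\mathbb{R}^m$ for the two slope vectors. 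Reading the three pieces of $\mathcal{N}$ off directly gives $s_+(\mathcal{N}) = s_-(\mathcal{N}) = 1$.

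The heart of the argument is the claim that every \abs-network $f\colon\mathbb{R}\to\mathbb{R}$ that applies at least one activation satisfies $s_+(f) = -\,s_-(f)$. Granting this, the lemma follows: $\mathcal{N}$ is not affine, so any network computing it must use an activation, whence $1 = s_+(\mathcal{N}) = -\,s_-(\mathcal{N}) = -1$, a contradiction; hence no \abs-network equals $\mathcal{N}$, and the \abs{} activation is not \universal{}.

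I would prove the claim by induction on the layers, maintaining the invariant that the signal $g\colon\mathbb{R}\to\mathbb{R}^n$ produced immediately after any \abs-activation has $s_+(g)\ge 0$ componentwise and $s_-(g) = -\,s_+(g)$. Base case: if the first affine layer is $L_1(x) = x\,w_1 + b_1$, then for $|x|$ large every coordinate of $L_1(x)$ is an affine function of constant sign (or an identically-zero constant), so coordinate $i$ of $|L_1(x)|$ is affine with slope $|(w_1)_i|$ as $x\to+\infty$ and $-|(w_1)_i|$ as $x\to-\infty$; thus $s_+ = |w_1|\ge 0$ and $s_- = -s_+$. Inductive step: an affine layer with matrix $W$ turns a slope pair $(q,-q)$ into $(Wq,-Wq)$, and the following \abs-activation — again using that on each tail every coordinate is affine with constant sign — turns this into $\big(|Wq|,\,-|Wq|\big)$, which restores the invariant. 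Finally, the last layer has no activation after it: applying its matrix $W_{k+1}$ to the preceding signal, whose slope pair is $(q,-q)$ by the invariant, produces slope pair $\big(W_{k+1}q,\,-W_{k+1}q\big)$, i.e. $s_+(f) = -\,s_-(f)$, completing the induction.

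I expect the only delicate point to be the routine-but-necessary justification that each intermediate signal, being a finite composition of piecewise-linear maps, really is affine — and hence of constant sign, possibly identically zero — on a neighbourhood of $+\infty$ and of $-\infty$; this is precisely what licenses the ``absolute value acts linearly in the tail'' computations in the base case and the step. It is worth noting that the argument never uses the operator-norm bound on the linear layers, so the obstruction is purely combinatorial. Equivalently, the invariant says that $f(x) - f(-x)$ is bounded for every \abs-network, so the average gradient $\big(f(x) - f(-x)\big)/(2x)$ tends to $0$, whereas for the \nfun{} it equals $1 - 1/x \to 1$ — the property highlighted in \cref{sec:method}.
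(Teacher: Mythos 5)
Your argument establishes the lemma only for networks in which the absolute value is applied to \emph{every} coordinate of \emph{every} activation layer; in that restricted class your tail-slope invariant $s_+(f)=-s_-(f)$ is indeed correct. But that is not the class the paper's lemma has to cover: the paper's own proof explicitly treats an activation that is applied ``to at least some of the elements'', i.e.\ it allows channels on which the activation acts as the identity, and this broader class is exactly what is needed afterwards to deduce \cref{thm:2piecebad} and \cref{thm:maxminbad}, since both reductions rewrite the given activation as a combination of \abs{} \emph{and identity connections}. In that class your key claim is false: take the \ols{} layer $x\mapsto \frac{1}{\sqrt{2}}(x,x)$, apply \abs{} to the second channel only, and follow with the \ols{} layer $(a,b)\mapsto \frac{1}{\sqrt{2}}(a+b)$; the result is $\frac{1}{2}\bigl(x+|x|\bigr)=\mathrm{ReLU}(x)$, which uses one activation yet has $s_+=1$ and $s_-=0$. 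The induction invariant $(q,-q)$ is destroyed the moment a coordinate bypasses the activation, and your closing reformulation ``$f(x)-f(-x)$ is bounded for every \abs-network'' is simply untrue for the networks the lemma is about.

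Your remark that the operator-norm bound is never used is the tell-tale sign of the gap: once identity channels are allowed, the statement itself fails without that bound, because $\mathcal{N}(x)=x-\bigl|x+\tfrac{1}{2}\bigr|+\bigl|x-\tfrac{1}{2}\bigr|$ is an exact representation of the \nfun{} by unconstrained linear layers, \abs{} activations and an identity channel. So any proof covering the operative class must exploit the \ols{} constraint, and this is precisely what the paper does: at the first non-trivially applied \abs{}, with pre-activation $\vec{\theta}x+\vec{b}$ and $\|\vec{\theta}\|_2\le 1$, comparing two inputs placed symmetrically around the zero of the activated coordinate $t$ loses $\theta_t^2\delta^2$ of squared distance, so the magnitude of the average gradient of the network output over large symmetric intervals is at most $\sqrt{1-\theta_t^2}<1$, contradicting the fact that for the \nfun{} this average tends to $1$. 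Your tail-slope bookkeeping could only be repaired by reintroducing a quantitative, norm-based argument of exactly this kind for the channels that skip the activation.
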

\begin{proof}
\newcommand{\vfun}{\operatorname{v}} %
\newcommand{\wfun}{\operatorname{w}} %
In order to prove \Cref{lem:absvalbad}, it is enough to show that
there exists a 1-dimensional, \ols{}, \CPWL{} function
that can not be expressed by any \ols{} \abs-network.
There is in fact a simple function that can not be expressed,
we call it the \nfun. It is given as:
\begin{align}
    \mathcal{N}(x) = \left\{\begin{array}{llrll}
        x+1 & \text{ ... } &            & x  & \le -\half \\
        -x  & \text{ ... } & -\half \le & x  & \le  \half \\
        x-1 & \text{ ... } &  \half \le & x.
    \end{array} \right.
\end{align}
It is visualized in \cref{fig:n_function}.
We will show that \abs-networks can not express this function.

Suppose there is an \abs-network $f$ that can express the \nfun.
First, note that the \nfun is not linear,
so we need to apply an activation function at some point.
Consider the first time the \abs-activation is applied non-trivially.
We will consider the input and the output of this first activation,
and denote them by $\vfun(x) \in \mathbb{R}^k$
and $\wfun(x) \in \mathbb{R}^k$ 
(as a function of input $x \in \mathbb{R}$).
We obtain $\wfun(x)$ from $\vfun(x)$
by applying the \abs-activation to at least some of
the elements.
Furthermore, we can obtain the output of the network, $f(x)$,
from $\wfun(x)$ by applying
the remaining layers of the network.
The only assumption we make about those layer is that they are
all \ols.

By our assumption, we know that $\vfun$ is a linear function of the input,
write 
\begin{align}
    \vfun(x) = \vec{\theta} x + \vec{b},
\end{align}
for some $\vec{\theta}\in \mathbb{R}^k$ and $\vec{b} \in \mathbb{R}^k$.
Note that $\|\vec{\theta}\|_2 \le 1$ by the 1-Lipschitz property.

Now consider an index $t$ such that
$\wfun(x)_t = |\vfun(x)_t|$, and $\theta_t \ne 0$,
and define $x_0 \in \mathbb{R}$ so that $\vfun(x_0)_{t} = 0$.
Then we have that for any $\delta \in \mathbb{R}$:
\newcommand{\halfdelta}{\frac{\delta}{2}}
\newcommand{\wfp}{\wfun \left(x_0 + \halfdelta \right)}
\newcommand{\wfm}{\wfun \left(x_0 - \halfdelta \right)}
\newcommand{\vfp}{\vfun \left(x_0 + \halfdelta \right)}
\newcommand{\vfm}{\vfun \left(x_0 - \halfdelta \right)}
\newcommand{\xpd}{\left(x_0 + \halfdelta \right)}
\newcommand{\xmd}{\left(x_0 - \halfdelta \right)}
\begin{align}
    w \xpd_t
    = \left| v \xpd_t \right|
    = \left| \theta_t \frac{\delta}{2} \right|
    = \left| v \xmd_t \right|
    = w \xmd_t
\end{align}
and 
\begin{align}
    &\left\| \wfp - \wfm \right\|_2^2 \\
    &= \sum_{i=1}^k \left( \wfp_i - \wfm_i \right)^2 \\
    &= \sum_{i\ne t} \left( \wfp_i - \wfm_i \right)^2 \\
    &\le \sum_{i\ne t} \left( \vfp_i - \vfm_i \right)^2 \\
    &=\left\| \vfp - \vfm \right\|_2^2 
    - \left| \theta_t \delta \right|^2.
\end{align}
Then, for $f$ the output of the full network
we can use the Lipschitz property of the layers
in order to get the following:
\begin{align}
    &\left( f\xpd - f\xmd \right)^2 \\
    &\le \left\| \wfun \xpd - \wfun \xmd \right\|_2^2 \\
    &\le \left\| \vfun \xpd - \vfun \xmd \right\|_2^2 
        - \left| \theta_t \delta \right|^2 \\
    &\le \left\| \xpd - \xmd \right\|_2^2 
        - \left| \theta_t \delta \right|^2 \\
    &= \left( 1-\theta_t^2 \right) \delta^2.
\end{align}

By our assumption, $\theta_t \ne 0$, so the final term
is strictly smaller than $\delta^2$.

However, the \nfun{} has the property that
\begin{align}
    & \lim_{\delta \rightarrow \infty} \left( \frac{
        \mathcal{N}\xpd - \mathcal{N}\xmd
        }{\delta}\right)^2
    = 1,
\end{align}
and this contradicts that $f$ is equal to the \nfun.
\end{proof}

This result will allow us to prove one of our main theorems: %
\twopiecebad*
\begin{proof}
    Any 2-piece piecewise linear activation function
    can be written as a linear (and \ols) combination
    of an \abs-activation and the identity map
    (and some bias terms).
    Therefore, \Cref{thm:2piecebad} follows directly from \Cref{lem:absvalbad}.
\end{proof}
This corollary shows that activations such as ReLU
and leaky ReLU are not \universal{}.
Finally, we will also show that the commonly used
\mm-activation is not \universal.
\maxminbad*

\begin{proof}
    It has been shown before that \mm-networks can express
    exactly the same class of functions as \abs-networks
    on bounded inputs \citep{Anil_2019_ICML}.
    Similarly, on unbounded input, 
    any \abs-network
    can be expressed as a network consisting of \abs{} activations
    as well as identity connections.
One way of doing this is by expressing the \mm-activation
(similar to the construction in \cite{Anil_2019_ICML}) as
\begin{align} \label{eq:maxmin_equals_abs_id}
    \operatorname{MaxMin} (
    \left( \begin{matrix} x \\ x \end{matrix} \right) )
    = M \sigma ( M 
        \left( \begin{matrix} x \\ y \end{matrix} \right) ),
\end{align}
where
\begin{align}
    M = \frac{1}{\sqrt{2}} \left[ 
        \begin{matrix} 1 & 1 \\ 1 & -1  \end{matrix} \right]
    \quad \text{ and } \quad
    \sigma( \left( \begin{matrix} x \\ y \end{matrix} \right) )
    = \left( \begin{matrix} x \\ |y| \end{matrix} \right).
\end{align}
Therefore, any function that can be expressed by 
a \mm-network can also be expressed by a network
with \abs-activations (and identity connections),
and \Cref{thm:2piecebad} implies
that also \mm{} is not universal.
\end{proof}

Finally, activations such as 
Householder activations \citep{householder_2021_singla}
can also be written as a concatenation of rotations
an \mm-activations 
(see e.g. \cite{Splines_2022_ArXiv}).  
Therefore, these are not \universal either.

\section{Universality of the \titlenact{}} \label{sec:nactgood}

In this section we will restate and proof Theorem \ref{thm:nactuniversal} 
about the universality for the \nact{}:
\nactuniveral*

Recall the definition of the \nact{}:
\nactdef{}%
where $\thmax = \max\left(\theta_1, \theta_2\right)$
and $\thmin = \min\left(\theta_1, \theta_2\right)$.
For a visualization of the \nact see \Cref{fig:n_activation}.
Furthermore recall that an \expressable{} function is a \ols{}, \CPWL{} function.

\newcommand{\wgo}{with gradient $1$ before the first and after the last non-linearity}

We will prove two useful lemmas in order to prove \Cref{thm:nactuniversal}.
The first one is about expressing increasing functions:
\begin{lem}
    Any increasing, \expressable{}, \onedim{} function $f$
    \wgo{} and 
    with $k$ non-linearities
    can be expressed with $k$ \ols{}, linear layers
    and $(k-1)$ \nact{}s.
\end{lem}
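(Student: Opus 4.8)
The plan is to prove the lemma by induction on the number of non-linearities $k$, building the function one ``kink'' at a time from left to right. An increasing \expressable{} function $f$ with gradient $1$ before its first and after its last non-linearity is determined by its non-linearity locations $x_1 < x_2 < \dots < x_k$ and the slopes $s_0 = 1, s_1, \dots, s_{k-1}, s_k = 1$ on the intervals between them, where each $s_i \in [0,1]$ by the \ols{} property (and $s_i \ge 0$ since $f$ is increasing). The base case $k = 0$ is trivial: an affine increasing map with slope $1$ is itself a single \ols{} linear layer and needs no \nact{}.

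\textbf{The inductive step: peeling off the leftmost kink.}
For the inductive step, I would write $f = g \circ h$, where $h$ is a single \ols{} linear map and $g$ is an increasing \expressable{} function with $k-1$ non-linearities (with gradient $1$ before the first and after the last), composed with one \nact{}. The idea is that the \nact{} $\mathcal{N}(x; \theta_1, \theta_2)$ with $\theta_1 = -\infty$ acts as an absolute value around $\theta_2$; but for this lemma the key observation is different. On an interval where $f$ already has slope $1$, feeding the signal through an \nact{} whose two kinks $\thmin < \thmax$ are placed appropriately lets us carve out a middle segment of slope $-1$, which a subsequent $\pm 1$ linear scaling can convert into whatever reduced slope we need. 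Concretely, I would use the identity that composing an increasing piecewise-linear map with the \nact{} and a final sign flip reproduces the same map with one of its slopes altered: placing the two breakpoints of a single \nact{} at the preimages of two consecutive kink locations of $f$, and then applying the $-x$ branch in between, converts a locally-slope-$1$ segment into a slope-$s$ segment for any target $s \in [0,1]$ via an affine combination. Unwinding, each new kink of $f$ costs exactly one \nact{} and one additional linear layer, giving $k$ linear layers and $k-1$ \nact{}s overall. I would make the bookkeeping precise by tracking the ``not-yet-resolved'' rightmost affine piece of slope $1$ as the induction hypothesis's boundary condition, which is exactly why the ``gradient $1$ after the last non-linearity'' assumption is needed.

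\textbf{Width and Lipschitz constraints.}
Throughout, all linear maps used are one-dimensional scalings by $\pm 1$ together with bias shifts, hence trivially \ols{}, and the \nact{} is \ols{} by construction, so the composed network is \ols{} with width $1$ (or $2$ if one wants to keep a copy of the input around, though for a purely increasing function width $1$ suffices). I would verify that the slopes stay in $[0,1]$ at every stage of the construction — this is automatic since we only ever decrease a slope from $1$ toward $0$ by an affine-in-$[-1,1]$ combination of a slope-$1$ and a slope-$(-1)$ branch — so no intermediate representation violates the Lipschitz bound.

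\textbf{Main obstacle.}
The main obstacle I anticipate is the exact affine algebra of the inductive step: showing that one \nact{} with suitably chosen breakpoints, sandwiched between $\pm 1$ linear layers, changes precisely one slope of the running function while leaving the already-constructed left part and the preimages of the remaining kinks controlled. In particular one must check that inserting a new \nact{} does not disturb the kinks constructed in earlier steps — i.e.\ that the new \nact{}'s breakpoints can be pushed far enough to the right (in the current coordinates) that the earlier part of the function lies entirely in one linear branch of the new \nact{}. This is a monotonicity/ordering argument that should go through cleanly because $f$ is increasing, but it is where the care is needed, and it is the reason the construction proceeds left-to-right rather than in arbitrary order.
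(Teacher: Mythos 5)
Your overall plan --- induct left-to-right, handling one slope segment at a time, with $k$ linear layers and $(k-1)$ \nact{}s --- is the same skeleton as the paper's proof. The problem is that the inductive step's actual content, the gadget that turns a slope-$1$ piece into a slope-$s_i$ piece while keeping \emph{every layer} \ols{}, is not supplied, and the explicit claims you make about it are false. You assert that all linear maps can be taken to be one-dimensional scalings by $\pm 1$ with bias shifts, and that width $1$ suffices. Under those restrictions the slope of the composed function on any linear piece is a product of the layer weights times $\pm1$ factors from the \nact{}s; since the slope before the first non-linearity must be $1$ and each weight has magnitude at most $1$, every weight must have magnitude exactly $1$, so every slope of the output is $\pm 1$. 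A width-$1$ \ols{} \nact{}-network with the required boundary behaviour therefore cannot express, e.g., an increasing \expressable{} function with a single interior segment of slope $\tfrac12$, which the lemma must cover. Your phrase ``via an affine combination'' points toward the right repair but contradicts the width-$1$, $\pm1$-scaling claim, and your Lipschitz check (``slopes stay in $[0,1]$'') verifies the wrong thing: what must be checked is that each individual linear layer --- in particular the $1\to2$ split and $2\to1$ recombination needed to form that affine combination --- has operator norm at most $1$, not merely that the end-to-end scalar function is \ols{} (a naive split $x \mapsto (x,x)$ already has norm $\sqrt2$).

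The paper's proof consists essentially of this missing gadget: to change the current slope-$1$ tail into slope $s_i$ on $[t_i,t_{i+1}]$ it picks $\alpha_i,\beta_i$ with $\alpha_i^2+\beta_i^2=1$ and $\alpha_i^2-\beta_i^2=s_i$, splits the signal with the column $(\alpha_i,\beta_i)^\top$, applies the \nact{} with parameters $\beta_i t_i$ and $\beta_i t_{i+1}$ to the second channel (identity, i.e.\ $\theta_1=\theta_2=0$, on the first), and recombines with the row $(\alpha_i \;\, \beta_i)$; both maps are a column/row of an orthogonal matrix and hence \ols{}, the resulting slope is $\alpha_i^2-\beta_i^2=s_i$ inside the interval and $\alpha_i^2+\beta_i^2=1$ outside, and merging adjacent matrices gives the stated count of $k$ layers and $(k-1)$ \nact{}s at width $2$ (not $1$). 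Your anticipated obstacle --- that a new \nact{} might disturb previously built kinks --- is by contrast handled automatically in this bookkeeping, because the running function equals the identity (up to bias) to the right of $t_i$, so the flipped region in the second channel is exactly $[\beta_i t_i,\beta_i t_{i+1}]$ and everything to the left only gets rescaled by $\alpha_i^2+\beta_i^2=1$. To fix your write-up, replace the $\pm1$-scaling/width-$1$ claims with this two-channel orthogonal-split construction (or an equivalent one) and verify the operator-norm condition for the split and recombination maps explicitly.
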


\begin{proof}   
    Suppose $f$ has non-linearities at $t_1, \dots, t_k$.
    Then for $i=1, \dots, k$ we will construct a continuous function
    $g_i: \mathbb{R} \rightarrow \mathbb{R}$
    that has $i$ layers such that
    \begin{align}
        \begin{array}{rllll}
            g_i'(x) &=& f'(x) &\quad \text{ if } \quad  & x < t_i \\
            g_i(x)  &=& x     &\quad \text{ if } \quad  & x \ge t_i.
        \end{array}
    \end{align}
    Then up to a bias term $g_k = f$ 
    and we are done.

    In order to construct such functions $g_i$,
    we will  define $s_i$ to be the slope of $f$
    between $t_i$ and $t_{i+1}$, and furthermore
    define $\alpha_i$ and $\beta_i$ such that
    $\alpha_i^2 - \beta_i^2 = s_i$ and $\alpha_i^2 + \beta_i^2 = 1$.
    We also define $\sigma_i: \mathbb{R} \rightarrow \mathbb{R}$ 
    as the (element-wise) application of the \nact{}, 
    with certain parameters:
    \begin{align}
        \sigma_i \left( \begin{matrix} x_1 \\ x_2 \end{matrix} \right)
        = \left( \begin{matrix} x_1 \\
        \mathcal{N}(x_2; \beta_i t_i, \beta_i t_{i+1}) \end{matrix} \right).
    \end{align}
    Note that the first element is given as the identity,
    which is equal to the \nact{} with both parameters equal to $0$.

    Having defined $\alpha_i$, $\beta_i$ as well as $\sigma_i$,
    we can define $g_i$. Set $g_1(x) = x$ and
    \begin{align} \label{gdef}
        g_{i+1}(x) =
        \left( \begin{matrix} \alpha_i & \beta_i \end{matrix} \right)
        \sigma_i \left(
        \left( \begin{matrix} \alpha_i \\ \beta_i \end{matrix} \right)
        g_i(x)
        \right).
    \end{align}
    Note that in this definition we use 2 matrix multiplications,
    which seems to be contradicting the fact that $g_i$ has $i$
    layers. However, we can merge two adjacent matrix multiplications
    into one, allowing us to in fact express $g_i$ using $i$ layers.

    Furthermore, since $\alpha_i^2 + \beta_i^2 = 1$ holds, both
    the matrices have a spectral norm of at most one.
    To see this note that the matrix
     \begin{align} \label{alphabeta}
        \left( \begin{matrix} 
            \alpha_i & \beta_i \\ 
            \beta_i & -\alpha_i
        \end{matrix} \right)
    \end{align}
    is orthogonal, 
    and both matrices in \Cref{gdef} are a row
    or a column of the matrix in \Cref{alphabeta}.

    From the definition of $g_{i+1}$ we get that
    \begin{align}
        g_{i+1}(x)
        &= \alpha_i^2 g_i(x)
            + \beta_i \mathcal{N}\left(\beta_i g(x); 
            \beta_i t_i, \beta_i t_{i+1}\right) \\
        &= \alpha_i^2 g_i(x)
            + \beta_i^2 \left\{ \begin{array}{llrll}
                g_i(x) - 2 g_i(t_i)
                & \text{ .. } & & g(x) & \le t_i \\
                -g_i(x)
                & \text{ .. } & t_i \le & g(x) & \le t_{i+1} \\
                g_i(x) - 2 g_i(t_{i+1})
                & \text{ .. } & t_{i+1} \le & g(x), & \\
        \end{array} \right.
    \end{align}
    and therefore
    \begin{align}
        g_{i+1}'(x)
        &= \left\{ \begin{array}{lll}
            \left(\alpha_i^2 - \beta_i^2\right) g_i'(x)   & \text{ .. } & t_i \le x \le t_{i+1} \\
            \left(\alpha_i^2 + \beta_i^2\right) g_i'(x)   & \text{ .. } & \text{otherwise}.
        \end{array} \right. 
    \end{align}

    Noting that $\alpha_i^2 - \beta_i^2 = s_i$ 
    and $\alpha_i^2 + \beta_i^2 = 1$ completes the proof.
\end{proof}

We can use the lemma above as an inductive basis for 
the proof of the following lemma:
\begin{lem} \label{lem:gradient1outside}
    Any \expressable{}, \onedim{} function $f$
    \wgo{} and 
    with $k$ non-linearities and $2l$ extreme points
    can be expressed with $k$ \ols{}, linear layers
    and $(k + l - 1)$ \nact{}s.
\end{lem}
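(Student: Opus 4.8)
The plan is to prove Lemma~\ref{lem:gradient1outside} by induction on $l$, the number of pairs of extreme points, with the previous lemma on increasing functions serving as the base case $l=0$: a function with unit gradient at both ends and no extreme points is increasing, so it is covered by $k$ linear layers and $k-1$ \nact{}s, which matches $k+l-1$.

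For the inductive step I would take an \expressable{}, \onedim{} function $f$ with unit gradient at both ends and $2l$ extreme points, $l\ge 1$; since the gradient is $1$ before the first and after the last non-linearity, these alternate as $a_1<b_1<\dots<a_l<b_l$ with the $a_i$ local maxima and the $b_i$ local minima. The heart of the argument is to \emph{fold} $f$ into a function $g$ with $2(l-1)$ extreme points such that $f(x)=\mathcal{N}\bigl(g(x);\,-f(a^\ast),\,-f(b^\ast)\bigr)$ for a well-chosen pair $(a^\ast,b^\ast)$. I would choose $a^\ast$ to be a local maximum of largest value and $b^\ast$ a point of $[a^\ast,b_l]$ where $f$ attains its minimum over that interval; since $f(b_l)<f(a^\ast)$, such a $b^\ast$ is a local minimum of $f$ lying strictly to the right of $a^\ast$. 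With this choice one checks the three ``sandwich'' inequalities $f\le f(a^\ast)$ on $(-\infty,a^\ast]$ (because $a^\ast$ is the highest local maximum), $f(b^\ast)\le f\le f(a^\ast)$ on $[a^\ast,b^\ast]$ (these are the extremal values of $f$ on the subinterval $[a^\ast,b^\ast]$ of $[a^\ast,b_l]$), and $f\ge f(b^\ast)$ on $[b^\ast,\infty)$ (since on $[b^\ast,b_l]$ the value $f(b^\ast)$ is the minimum, and on $[b_l,\infty)$ the function only increases with $f(b_l)\ge f(b^\ast)$).

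Given $(a^\ast,b^\ast)$ I would define $g$ to be $f-2f(a^\ast)$ on $(-\infty,a^\ast]$, $-f$ on $[a^\ast,b^\ast]$, and $f-2f(b^\ast)$ on $[b^\ast,\infty)$. This $g$ is continuous and \ols{}, has unit gradient before its first and after its last non-linearity, and shares every non-linearity of $f$; inspecting the one-sided slopes shows that $g$ is monotone through $a^\ast$ and through $b^\ast$, so these two points are no longer extreme, while every other extreme point of $f$ remains one for $g$ (the ones in $(a^\ast,b^\ast)$ merely swap maximum/minimum under the reflection). Hence $g$ has exactly $2(l-1)$ extreme points and $k$ non-linearities. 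The three sandwich inequalities say precisely that $g$ sits in the first, second and third branch of $\mathcal{N}(\cdot;-f(a^\ast),-f(b^\ast))$ on the three respective intervals, and a one-line computation on each branch gives $\mathcal{N}(g(x);-f(a^\ast),-f(b^\ast))=f(x)$ everywhere. Applying the induction hypothesis to $g$ produces a network with $k$ linear layers and $k+l-2$ \nact{}s, and composing one more \nact{} on top expresses $f$ with $k$ linear layers and $k+l-1$ \nact{}s, as required; if the fold happens to destroy a non-linearity (a symmetric peak at $a^\ast$ or $b^\ast$), I would pad with identity \nact{}s to restore the count.

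The step I expect to be the main obstacle is finding the fold: one must pick $(a^\ast,b^\ast)$ so that all three sandwich inequalities hold simultaneously — the obvious candidates (the first extreme pair, or the global highest maximum together with the global lowest minimum) can fail — and then verify with some care that the fold lowers the number of extreme points by exactly two while preserving the number of non-linearities, so that the layer counts telescope correctly through the induction.
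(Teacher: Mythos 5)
Your proposal follows essentially the same route as the paper's proof: induction on $l$ with the increasing case as base, folding $f$ between the highest local maximum and the lowest local minimum to its right, applying the induction hypothesis to the folded function $g$, and recovering $f$ with one additional \nact{} whose parameters are exactly the paper's $g(k_s)=-f(a^\ast)$ and $g(k_t)=-f(b^\ast)$. Your extra care about the alternation of extrema, the existence of $b^\ast$, and padding with identity \nact{}s if a non-linearity vanishes under the fold only tightens details the paper leaves implicit, so the argument is correct.
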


\begin{proof}
    We will proof this lemma by induction on $l$.
    If $l=0$, $f$ is increasing, so we proved this case
    in the previous lemma.
    
    Now suppose we want to fit a function $f$ 
    with $2l$ local extreme points.

    Define
    \begin{itemize}
        \item $k_s$ .. position of highest local maxima,
        \item $k_t$ .. position of lowest local minima with $k_t > k_s$.
    \end{itemize}
    Then, by the definition of $k_s$ and $k_t$, we have that
    \begin{itemize}
        \item $f(x) \le f(k_s)$ for $x \le k_s$,
        \item $f(k_t) \le f(x) \le f(k_s)$ for $k_s \le x \le k_t$, and
        \item $f(k_t) \le f(x)$ for $k_t \le x$.
    \end{itemize}

    We will define a function $g$, that is similar to $f$ but
    with the part between $k_s$ and $k_t$ flipped. Formally,
    \begin{align}
    g = \left\{ \begin{array}{lllll}
        f(x) - 2 f(k_s)     & \text{ .. } &         &x  &\le k_s \\
        - f(x)              & \text{ .. } & k_s \le  &x  &\le k_t \\
        f(x) - 2f(k_t)      & \text{ .. } & k_t \le  &x. \\
    \end{array} \right.
    \end{align}

    Then $g$ does not have local extreme points at $k_s$ and $k_t$,
    and it keeps all other local extreme points of $f$
    (and does not have any new ones).
    Therefore, $g$ has $2l-2$ local extreme points, 
    so by induction we can represent it by a network with $k$
    layers and $(k+(l-1)-1)$ \nact{}s.

    Furthermore, the properties we stated above for $f$ imply
    similar properties for $g$:
    \begin{itemize}
        \item $g(x) \le g(k_s)$ for $x \le k_s$,
        \item $g(k_s) \le g(x) \le g(k_t)$ for $k_s \le x \le k_t$, and
        \item $g(k_t) \le g(x)$ for $k_t \le x$
    \end{itemize}
    This implies that
    \begin{align}
        g(k_s) \le g(x) \le g(k_t)
        \quad \Longleftrightarrow \quad
        k_s \le x \le k_t.
    \end{align}
    Define a function $h$ as applying the \nact
    with certain parameters to the output of $g$:
    \begin{align}
        h(x) = \mathcal{N}\left(g(x); 
        g(k_s), g(k_t)
        \right).
    \end{align}
    Then
    \begin{align}
        h'(x) = \left\{\begin{array}{lll}
            -g'(x)  & \text{ .. } & k_s \le x \le k_t \\
            g'(x)   & \text{ .. } & \text{otherwise}.
        \end{array} \right.,
    \end{align}
    so $h'(x) = f'(x)$, and $h(x) = f(x)$ up to a bias term.
    Therefore, we know we can express $f$ by using $g$ and
    an additional \nact.
    So, by induction we know we can express $f$ using
    a network with $k$ layers and $(k+l-1)$ \nact{}s.
    This completes the proof.
\end{proof}

\begin{corr}
    Any \expressable{} function $f$ with bounded input
    and $k$ non-linearities
    can be expressed using $(k+2)$ linear \ols{} layers
    and $(\frac{3}{2} k+ 2)$ \nact{}s.
\end{corr}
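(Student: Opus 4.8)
The plan is to reduce the statement to \Cref{lem:gradient1outside} by first extending $f$ from its bounded domain to all of $\mathbb{R}$ so that it has gradient $1$ before its first and after its last non-linearity, at the cost of at most two extra non-linearities. Concretely, suppose $f$ is \expressable{} on an interval $[a,b]$ with non-linearities $t_1 < \dots < t_k$, and let $s_0, s_k \in [-1,1]$ denote the slopes of its leftmost and rightmost linear pieces. If $s_0 \neq 1$, I would pick a point $t_0 < a$ and extend $f$ to the left so that it keeps slope $s_0$ on $[t_0, a]$ and has slope $1$ on $(-\infty, t_0]$; since every piece still has slope of magnitude at most $1$ and the function stays continuous, this extension is still \ols{}. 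I would do the mirror-image construction at the right end whenever $s_k \neq 1$, obtaining a function $\tilde f$ that is \expressable{} on all of $\mathbb{R}$, has gradient $1$ outside a bounded region, and has $k'$ non-linearities with $k \le k' \le k+2$. On $[a,b]$ we still have $\tilde f = f$, so it suffices to represent $\tilde f$.

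The next step is to count extreme points and invoke the lemma. Every local extreme point of a \CPWL{} function occurs at a non-linearity, so $\tilde f$ has at most $k'$ local extreme points; moreover, since its gradient equals $+1$ both far to the left and far to the right, its local maxima and minima alternate and are equal in number, so the count is even, say $2l'$ with $2l' \le k'$. Applying \Cref{lem:gradient1outside} to $\tilde f$ then yields a width-$2$ network with $k'$ \ols{} linear layers and $k' + l' - 1$ \nact{}s representing $\tilde f$ (hence $f$ on its original domain). Using $k' \le k+2$ and $l' \le k'/2$ gives at most $k+2$ linear layers and at most $k' + l' - 1 \le \frac{3}{2}k' - 1 \le \frac{3}{2}(k+2) - 1 = \frac{3}{2}k + 2$ \nact{}s; if exactly those counts are wanted, one pads with identity linear layers and with \nact{}s having $\theta_1 = \theta_2 = 0$, which act as the identity map.

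I expect the only real obstacle here to be degenerate bookkeeping rather than any substantive difficulty. In particular, when $k = 0$ and $s_0 = s_k = 1$ the "extension" is already an affine map of slope $1$, for which \Cref{lem:gradient1outside} (with its $k-1$ term) is vacuous or negative; this case is handled directly by representing $f$ with a single linear layer and no \nact{}s, comfortably within the stated budget. One should also confirm that the two end-breakpoints can be inserted independently and never introduce non-linearities beyond the two that were counted, which is immediate since each is just a single slope change appended to a \ols{} \CPWL{} function.
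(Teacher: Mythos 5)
Your proposal is correct and follows essentially the same route as the paper: extend $f$ to have gradient $1$ outside the bounded region (adding at most two non-linearities), apply \Cref{lem:gradient1outside}, and use the bound that local extreme points occur only at non-linearities to get $\frac{3}{2}k+2$ \nact{}s. Your extra remarks on padding with identity layers and the degenerate $k'=0$ case are harmless bookkeeping the paper leaves implicit.
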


\begin{proof}
    Consider the continuous function that agrees with $f$ on the
    bounded input interval, and has derivative $1$ outside of it.
    Call it $g$.
    This function has (at most) $(k+2)$ non-linearities.
    Therefore, by our previous theorem, we can express this function
    by a network with $(k+2)$ layers and 
    $(k + 2 + l -1)$ \nact{}s,
    where $2l$ is the number of local extreme points of $g$.
    Now note that $2l \le k+2$, since any local extreme point must
    be at a non-linearity. This completes the proof.
\end{proof}
With this, we proved that \nact{}s are \universal{} as long
as the inputs are bounded. 
We will extend this result below to show that in
fact any \expressable{}, \onedim{} function,
even with potentially unbounded inputs can be
expressed by a \nact-Network,
as long as we are allowed to use $3$ \abs-activations
(or set $\theta_1 = -\infty$).

\begin{lem}
    Any \ols{} \CPWL{} function $f$ with potentially
    unbounded input region can be expressed by a network
    of width 2,
    with $k+5$ linear \ols{} layers, 
    $(\frac{3}{2} k +2)$ \nact{}s
    as well as $3$ Absolute Value activations.
\end{lem}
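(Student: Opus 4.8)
The plan is to build $f$ up to its two unbounded rays with the machinery already developed, and then to correct the slopes on those rays using a small number of Absolute Value activations.

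\textbf{Setup and the basic gadget.} Let $t_1<\dots<t_k$ be the non-linearities of $f$ and let $s_L,s_R\in[-1,1]$ be the slopes of $f$ on $(-\infty,t_1)$ and on $(t_k,\infty)$; these lie in $[-1,1]$ by \ols{}-ness. Since $x\mapsto-x$ and $y\mapsto-y$ are \ols{} linear maps that can be absorbed into the first, resp.\ last, linear layer, pre-/post-composing with them replaces $(s_L,s_R)$ by any of $(s_R,s_L),(-s_L,-s_R),(-s_R,-s_L)$; so I may assume $s_L\ge0$, and I may assume $s_L,s_R\ge0$ unless $s_L$ and $s_R$ have strictly opposite signs. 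For $p\in\mathbb{R}$ and $s\in[-1,1]$ choose $\alpha,\beta\ge0$ with $\alpha^2+\beta^2=1$ and $\alpha^2-\beta^2=s$ (possible precisely because $|s|\le1$, exactly as in the lemma on increasing functions), and set
\[
  A_{p,s}(y)=p+\alpha^2(y-p)+\beta^2|y-p|=\begin{cases}y,&y\ge p,\\ p+s(y-p),&y\le p.\end{cases}
\]
This gadget is realized by one \ols{} linear layer into two channels (whose matrix is a row, resp.\ column, of a $2\times2$ orthogonal matrix, just as in the increasing-function lemma), one Absolute Value activation on the second channel (identity on the first), and one \ols{} linear layer back; merging adjacent linear layers, each gadget costs one extra linear layer and one Absolute Value activation. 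Its mirror $B_{p,s}(y):=-A_{-p,s}(-y)$ fixes $\{y\le p\}$ and rescales the slope on $\{y\ge p\}$ to $s$.

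\textbf{The non-negative case.} Recall that the proof of \cref{lem:gradient1outside}, applied to the function $\bar f$ that agrees with $f$ on $[t_1,t_k]$ and has slope $1$ on both rays, first constructs an \emph{increasing} function $\bar f_0$ with slope-$1$ rays and then applies $l$ ``flip'' \nact{}s, each acting strictly inside $(t_1,t_k)$; since $\bar f$ has at most $k$ non-linearities and hence $2l\le k$ extreme points, this costs in particular at most $k+2$ linear layers and $\frac{3}{2}k+2$ \nact{}s. The decisive point is that the ray corrections are performed \emph{on $\bar f_0$, before the flips}: because $\bar f_0$ is increasing, the set $\{x:\bar f_0(x)<\bar f_0(t_1)\}$ is exactly the left ray, so $A_{\bar f_0(t_1),s_L}$ changes precisely that ray to slope $s_L$, and since $s_L\ge0$ the result stays non-decreasing there; the subsequent flip \nact{}s only touch $(t_1,t_k)$, and their correctness uses only the monotone behaviour of the tails, so after the flips we obtain exactly $f$ on $[t_1,t_k]$ together with the corrected left ray. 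When $s_R\ge0$ we apply $B_{\bar f_0(t_k),s_R}$ to $\bar f_0$ in the same way. Adding one linear layer per gadget and absorbing the reflections then gives $\le k+5$ linear layers, $\le\frac{3}{2}k+2$ \nact{}s and $\le3$ Absolute Value activations, all of width $2$.

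\textbf{The main obstacle (strictly opposite signs).} The genuinely hard case — and the reason a third Absolute Value activation is budgeted — is $s_L\ge0>s_R$: the right ray must turn around and run off to $-\infty$. The difficulty is that no value-based operation which bends the high (or low) output values can be applied to the finished, non-monotone $\bar f$ without also distorting its interior, because a negative ray slope makes the ray sweep through the whole range of interior output values; and, since the construction is width $2$, one cannot simply run two branches in parallel and take a $\min$. The resolution I would pursue is to split at an argmax $x^*$ of $f$ (which exists in this case): the restriction of $f$ to $(-\infty,x^*]$, extended by a slope-$1$ ray, has non-negative ray slopes and so is handled by the previous case, and its output ray occupies exactly the values above $\max f=f(x^*)$; being monotone there, that ray can be re-routed by one further Absolute Value gadget at threshold $f(x^*)$, recursing if necessary on the (strictly smaller) remaining piece. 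Making this precise requires a careful choice of all gadget thresholds (and of the thresholds of the half-line flips used inside $\bar f_0$) so that the interior is never disturbed, together with a verification that the non-linearity count and the width-$2$ bookkeeping still come out to the stated $k+5$, $\frac{3}{2}k+2$, $3$. This threshold and counting verification, rather than any further idea, is the bulk of the proof.
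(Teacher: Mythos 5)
Your non-negative case can be made to work, but your treatment of the genuinely hard case, $s_L\ge 0 > s_R$, is only a sketch and, as sketched, does not go through within the stated budget. After you express $f$ restricted to $(-\infty,x^*]$ extended by a slope-$1$ ray and then re-route that output ray with an \abs gadget at threshold $f(x^*)$, you still have to create every non-linearity of $f$ to the right of $x^*$; these would have to be produced by value-based operations applied \emph{after} the re-routing, and the right-hand piece of $f$ sweeps down through the same range of output values as the already-finished left piece (both halves attain arbitrarily low values), so any further value-based bending necessarily distorts the left part as well --- exactly the obstruction you yourself identify at the start of that paragraph. A recursion ``on the remaining piece'' therefore either re-encounters this obstruction or consumes an additional Absolute Value activation at every level, breaking the bound of $3$. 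The paper's proof contains the one idea your proposal is missing: take the global extremum $v_m$ of $f$ (which sits at a non-linearity, since the two rays run off to the same infinity), and reflect the \emph{entire} half of $f$ beyond $v_m$ in the domain, i.e.\ define $g$ with $g'(x)=f'(x)$ for $x\le v_m$ and $g'(x)=-f'(x)$ for $x\ge v_m$. This $g$ has both ray slopes nonnegative and no more non-linearities than $f$, so it is covered by the first case with the full budget of $k+5$ layers, $\frac{3}{2}k+2$ \nact{}s and $2$ \abs activations; and because $g-g(v_m)$ is nonnegative on one side of $v_m$ and nonpositive on the other, a \emph{single} \abs activation applied to the output of $g$ (with suitable biases) recovers $f$ exactly. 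So the reflection happens to $f$ itself before the construction, not to an auxiliary ray afterwards, and no recursion is needed.

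A secondary remark: your handling of the nonnegative case is a genuinely different route from the paper's. The paper precomposes on the input side, writing $f=g\circ h$ where $h$ is the identity on $[v_s,v_t]$ and has slopes $s_s,s_t$ outside (built from $2$ \abs gadgets), while $g$ agrees with $f$ on $[v_s,v_t]$, has slope-$1$ rays and at most $k+2$ non-linearities, and is handled directly by \cref{lem:gradient1outside}. You instead insert output-side \abs gadgets between the increasing base network and the flips. This can be made rigorous, but your justification that the flips ``only touch $(t_1,t_k)$'' is not literally correct --- the flips are value-based --- and what is actually needed is an induction showing that the corrected tails keep their values weakly below (resp.\ above) every flip threshold $g(k_s)$ (resp.\ $g(k_t)$) at every stage, so that each flip acts on the tails as a pure shift; this is precisely the verification you defer. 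The paper's input-side composition buys freedom from re-opening the flip induction at the cost of two extra non-linearities in $g$, which its layer count absorbs.
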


Proofing this lemma will also finally proof
Theorem \ref{thm:nactuniversal}, since setting
$\theta_1 = -\infty$ and $\theta_2 = 0$ 
makes an \nact{} identical to an \abs{}-Activation.

\begin{proof}
    For this theorem, the gradient of $f$ outside the regions where
    all the non-linearities lie will be important.
    Define $s_s$ and $s_t$ as the gradient of $f$ before the first
    and after the last non-linearity.

    As a first case, suppose $s_s \ge 0$ and $s_t \ge 0$.
    Note that either (or both) of them being $0$ is a bit of a special case,
    but we have designed out proof so that it also works in this case.
    Now if $s_s > 0 $, define $v_s$ such that $v_s \le t_1$ and
    $f(v_s) = \min_{1 \le i \le k} f(t_i)$. 
    This is possible because $s_s > 0$.
    With this definition
    $f(x) \le f(v_s) \Longleftrightarrow x \le v_s$.
    If $s_s = 0$, set $v_s = t_1$.
    Similarly, if $s_t > 0$, define $v_t$ such that $v_t \ge t_k$,
    and furthermore $f(v_t) = \max_{1 \le i \le k} f(t_i)$.
    If $s_t = 0$, set $v_t = t_k$.

    Now define $g$ as the (continuous) function that has $g(x) = f(x)$ 
    as long as $v_s \le x \le v_t$,
    and $g$ has slope $1$ otherwise.
    By Lemma \ref{lem:gradient1outside} 
    (and the fact that $2l \le k$)
    we can express $g$ using $(k+2)$ layers 
    and $(\frac{3}{2}k + 2)$ \nact{}s.

    Further define $h$ such that $h(x) = x$ as long as $v_s \le x \le v_t$,
    and $h$ has slope $s_s$ and $s_t$ outside this region.

    Then, note that $f(x) = g(h(x))$,
    as
    they agree when $v_s \le x \le v_t$,
    and the derivative of $f$ and $g \circ h$ also agrees outside that region.

    It turns out that $h$ itself can be expressed using $3$ layers
    and $2$ Absolute Value activations. 
    For example, we can use a linear combination of $x$ 
    and $-|( x - v_t )|$ in order to obtain a function
    with slope 1 if $x \le v_t$ and slope $s_t$ otherwise,
    and similarly change the slope when $x \le v_s$ to $s_s$.
    The coefficients are defined similarly to Equation \ref{gdef}.
    This completes the proof for the case where $s_s \ge 0$
    and $s_t \ge 0$.
    
    The case $s_s \le 0$ and $s_t \le 0$ can be reduced to the one above
    by changing the sign in the first layer.

    Finally, there are the cases that either 
    $s_s < 0$ and $s_t > 0$
    or $s_s > 0$ and $s_t < 0$. 
    Again, by a sign change we only need
    to consider one of them, say $s_s > 0$ and $s_t < 0$.
    In this case $f$ does have a global minimum at one of the non-linearities,
    call this point $v_m$
    We consider a function $g$ that has $g'(x) = f'(x)$ when $x \le v_m$,
    and $g'(x) = -f'(x)$ otherwise.
    The case above us tells us that $g$ can be expressed,
    and applying an Absolute Value Activation to the output of $g$
    (and a suitable bias before and after that) allows us to express $f$.

    Q.E.D.
\end{proof}

\end{document}